\documentclass[11pt]{article}
\usepackage{appendix}
\usepackage{amsmath,amssymb,amscd,amsthm,ifthen,color,framed,bm}
\usepackage{graphicx}
\usepackage{rh_defs_21}
\usepackage{listings}

\usepackage{hyperref}
\hypersetup{
    bookmarks=true,         
    unicode=false,          
    pdftoolbar=true,        
    pdfmenubar=true,        
    pdffitwindow=false,     
    pdfstartview={FitH},    
    pdfsubject={Subject},   
    pdfproducer={Producer}, 
    pdfnewwindow=true,      
    colorlinks=true,       
    linkcolor=red,          
    citecolor=blue,        
    filecolor=magenta,      
    urlcolor=cyan           
}
\usepackage{booktabs} 
\usepackage{multirow} 
\usepackage{filecontents}

\usepackage[
backend=bibtex,
url=false,isbn=false,doi=false,
date=year,
hyperref=auto,
style=alphabetic,
natbib=true,
sorting=nty,
firstinits=true,
maxnames=2, maxbibnames=10,
]{biblatex}

\bibliography{bibliography}

\usepackage{pgfplots}
\usepgfplotslibrary{groupplots}
\pgfplotsset{compat=1.18} 

\usetikzlibrary{matrix,arrows,decorations.pathmorphing}

\usepackage{enumitem}

\usetikzlibrary{external}

\usepackage[table]{xcolor}
\usepackage{xparse}
\usepackage{titletoc}
\usepackage{graphicx}
\usepackage[most]{tcolorbox}

\newtheorem{lemma}{Lemma}

\newtheorem{proposition}{Proposition}

\newcommand{\dt}[1]{\frac{{\mathrm{d}}#1}{\mathrm{d}t}}

\newcommand{\rhohat}{\hat\rho}

 \newcommand{\rhotheta}{\rho_{\theta}}

\newcommand{\Ghat}{\widehat G}

\newcommand{\picond}{\pi_\theta(\cdot|x)}

\newcommand{\indicator}[1]{\mathbb{I}\left[#1\right]}

\newcommand{\E}{\mathbb{E}}
\newcommand{\nn}{\nonumber}

\newcommand{\R}{\mathbb{R}}

\newcommand{\M}{M}

\begin{document}

\begin{center}

{
\bf{\LARGE{Asymmetric Prompt Weighting for Reinforcement Learning with Verifiable Rewards}}
}

\vspace*{.2in}

{\large{
\begin{tabular}{cccc}
Reinhard Heckel$^\ast$, Mahdi Soltanolkotabi$^\circ$, Christos Thramboulidis$^\square$
\end{tabular}
}}

\footnotetext{Authors in alphabetical order.}

\vspace*{.05in}

\begin{tabular}{c}
$^\ast$Dept. of Computer Engineering, Technical University of Munich\\
$^\circ$Dept. of Electrical and Computer Engineering, University of Southern California\\
$^\square$Dept. of Electrical and Computer Engineering, University of British Columbia
\end{tabular}

\vspace*{.1in}

\today

\vspace*{.1in}

\end{center}


\begin{abstract}
Reinforcement learning with verifiable rewards has driven recent advances in LLM post-training, in particular for reasoning. Policy optimization algorithms generate a number of responses for a given prompt and then effectively weight the corresponding gradients depending on the rewards. The most popular algorithms including GRPO, DAPO, and RLOO focus on ambiguous prompts, i.e., prompts with intermediate success probability, while downgrading gradients with very easy and very hard prompts. 
In this paper, we consider asymmetric prompt weightings that assign higher weights to prompts with low, or even zero, empirical success probability. 
We find that asymmetric weighting particularly benefits from-scratch RL (as in R1-Zero), where training traverses a wide accuracy range, and less so in post-SFT RL where the model already starts at high accuracy. 
We also provide theory that characterizes prompt weights which minimize the time needed to raise success probability from an initial level to a target accuracy under a fixed update budget. In low-success regimes, where informative responses are rare and response cost dominates, these optimal weights become asymmetric, upweighting low success probabilities and thereby accelerating effective-time convergence.
\end{abstract}

\section{Introduction}

Recent advances for post-training LLMs have substantially improved their reasoning, math, and coding abilities. In particular, reinforcement learning with verifiable rewards (RLVR) that leverages simple--often binary--feedback, such as the correctness of an answer, have driven the recent progress in reasoning performance~\cite{deepseek-ai_DeepSeekR1IncentivizingReasoning_2025,team_KimiK2Open_2025,
openai_OpenAIO1System_2024}. 

Policy optimization algorithms for LLM post-training typically first sample a batch of prompts (e.g., math problems), second generate a set of responses for each prompt (e.g., solutions to math problems), third compute a reward (e.g., 0/1 depending if the solution is correct), and finally update the model based on the normalized rewards. 
The most popular algorithms including GRPO~\cite{shao_DeepSeekMathPushingLimits_2024}, DAPO~\cite{yu_DAPOOpenSourceLLM_2025}, RLOO~\cite{kool_EstimatingGradientsDiscrete_2020,ahmadian_BackBasicsRevisiting_2024} and variants thereof focus on ambiguous prompts, i.e., prompts with intermediate success probability. 

In this paper, we propose asymmetric prompt weightings assigning higher weights to prompts with low success probability. 
Those weightings also assigns non-zero weight even when all completions receive zero reward, whereas GRPO and variants assign zero weight to such gradients, and thus fail to leverage these as training signals.

We perform two sets of experiments: 
First, we consider two from-scratch RL, R1-Zero-like~\cite{deepseek-ai_DeepSeekR1IncentivizingReasoning_2025} setups where we start from a model initially not performing well on reasoning, and we reach high performance through RL. For the first of these, we train the base model Qwen2.5-3B on a countdown task, and for the second we train the base model Llama-3.1 on GSM8K. 
For both setups, the base model starts below 0.02 accuracy and improves to about 0.8, and our asymmetric prompt weighting outperforms symmetric schemes such as GRPO, DAPO, and RLOO. 
Therefore, asymmetric weighting consistently improves performance in the from-scratch regime.

Second, we consider two post-SFT RL setups where the model is already good at reasoning through SFT, and RL is used to further improve performance. Specifically, we train Llama-3.2-3B-instruct on MATH~\cite{hendrycks_MeasuringMathematicalProblem_2021}, and DeepSeek-R1-
Distill-Qwen-1.5B on DAPO-math~\cite{yu_DAPOOpenSourceLLM_2025}. 
In these settings, training begins from an intermediate success rate (0.3 and 0.4) and significantly improves (to 0.5 and 0.55), but we observe no meaningful difference between asymmetric and symmetric advantage weightings. 
In those post-SFT regime, asymmetric weighting yields no additional gain but do not hurt performance.

Finally, we provide theory that characterizes prompt weights which minimize the time needed to raise success probability from an initial level to a target accuracy under a fixed update budget. In low-success regimes, where informative rollouts are rare and rollout cost dominates, these optimal weights become asymmetric, upweighting low success probabilities and thereby accelerating effective-time convergence.


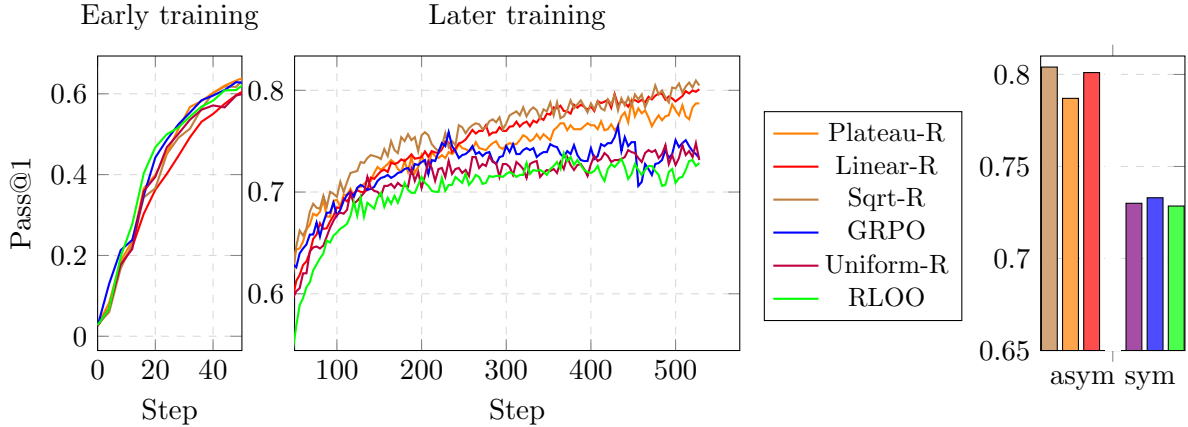
\begin{figure}[tb]
    \centering

\begin{tikzpicture}

    \begin{groupplot}[
        group style={
            group size=3 by 1,
            horizontal sep=0.7cm,
            xlabels at=edge bottom,
            ylabels at=edge left,
        },
        width=7.5cm,
        height=5.5cm,
        xlabel={Step},
        ylabel={Pass@1},
        grid=major,
        grid style={dashed, gray!30},
        legend style={
            at={(1.5,0.1)},
            anchor=south east,
            font=\small,
        },
        table/col sep=comma,
    ]

    \nextgroupplot[xmin=0, xmax=50, title={Early training},width=3.5cm]
        \addplot[red, thick] table[x=Step, y=log_reinforce1] {./fig/tinyzero_comparison.csv};
        \addplot[brown, thick] table[x=Step, y=a_grpo] {./fig/tinyzero_comparison.csv};
        \addplot[orange, thick] table[x=Step, y=a_reinforce] {./fig/tinyzero_comparison.csv};
        \addplot[blue, thick] table[x=Step, y=grpo] {./fig/tinyzero_comparison.csv};
        \addplot[purple, thick] table[x=Step, y=cancel_reinforce] {./fig/tinyzero_comparison.csv};
        \addplot[green, thick] table[x=Step, y=rloo] {./fig/tinyzero_comparison.csv};
    
    \nextgroupplot[xmin=50, title={Later training}, ylabel={}]
        \addplot[orange, thick] table[x=Step, y=a_reinforce] {./fig/tinyzero_comparison.csv};
        \addlegendentry{Plateau-R}
        \addplot[red, thick] table[x=Step, y=log_reinforce1] {./fig/tinyzero_comparison.csv};
        \addlegendentry{Linear-R}
        \addplot[brown, thick] table[x=Step, y=a_grpo] {./fig/tinyzero_comparison.csv};
        \addlegendentry{Sqrt-R}
        \addplot[blue, thick] table[x=Step, y=grpo] {./fig/tinyzero_comparison.csv};
        \addlegendentry{GRPO}
        \addplot[purple, thick] table[x=Step, y=cancel_reinforce] {./fig/tinyzero_comparison.csv};
        \addlegendentry{Uniform-R}
        \addplot[green, thick] table[x=Step, y=rloo_5e-7] {./fig/tinyzero_comparison.csv};
        \addlegendentry{RLOO}

    \nextgroupplot[
        xshift=3.3cm,
        ybar,
        width=3.5cm,
        ylabel={},
        xlabel={},
        symbolic x coords={asym sym},
        xtick=data,
        ymin=0.65,
        ymax=0.81,
        bar width=6pt,
        enlarge x limits=0.4,
        ymajorgrids=true,
        xmajorgrids=false,
    ]
    
    \addplot[fill=brown!70] coordinates {(asym sym,0.804)};
    \addplot[fill=orange!70] coordinates {(asym sym, 0.787)};
    \addplot[fill=red!70] coordinates {(asym sym, 0.801)};

    \addplot[fill=white!70,draw=none] coordinates {(asym sym, 0.801)};

    \addplot[fill=violet!70] coordinates {(asym sym,0.73)};
    \addplot[fill=blue!70] coordinates {(asym sym, 0.733)};
    \addplot[fill=green!70] coordinates {(asym sym, 0.7285)};

    \end{groupplot}
\end{tikzpicture}
\vspace{-0.2cm}
    \caption{From-scratch RL: TinyZero. Test error (i.e., fraction of correctly solved problems, Pass@1) during reinforcement learning. 
    Starting from a score of $0.025$
    the algorithms based on asymmetric weighting (Linear-R, Plateau-R, and Sqrt-R) climb to about $0.8$ while the symmetric weightings (RLOO, Uniform-R, and GRPO) only reach around $0.74$. 
    }
    \label{fig:tinyzerocomparison}
\end{figure}

\subsection{Related work}
There are a number of works that counteract GRPO's weak signal on hard prompts, or better leverage gradients from zero-reward rollouts. 

\citet{team_KimiK15Scaling_2025} used a prioritized sampling strategy where samples are reweighted by $1-\hat \rho$, which effectively makes harder examples more likely to be sampled. \citet{thrampoulidis_AdvantageShapingSurrogate_2025} interprets this through surrogate reward maximization (cf. Section~\ref{sec:surrogate reward perspective}), thus connecting it to GRPO-modifications targeting Pass@K maximization \citep{chen_PasskTrainingAdaptively_2025,mahdavi_AccuracyPolicyGradient_2025}.  While these forms of reweighting hard-examples have an effect analogous to an asymmetric prompt weighting,  prompts corresponding to $\hat \rho=0$ are still set to zero. In contrast, our work directly modifies the advantage estimators to ensure that prompts in the zero-success regime provide a boosted gradient signal. 

\citet{le_NoPromptLeft_2025} addresses zero-variance prompts by entropy-guided advantage shaping at the completion/token level, whereas we introduce prompt-level reweighting based on the success rate. 
\citet{feng_DontWasteMistakes_2025} reshapes learning based on how confident the model was at each mistake, whereas we reshape learning based on how hard the prompt currently is.
\citet{hong_APOEnhancingReasoning_2025}'s `asymmetric policy optimization' for multi-modal LLMs splits responses into positive and negative groups and applies different KL-based regularization schemes, while we adjust advantages. 
CoDaPO~\cite{zhou_CoDaPOConfidenceDifficultyAdaptive_2025} shares our motivation of counteracting GRPO's weak signal on hard prompts, but its difficulty reweighting is only indirectly tied to average performance and is applied jointly with an explicit confidence term, whereas our Linear-R uses a simple prompt-only weight. 
\citet{wang_ASPOAsymmetricImportance_2025} proposed an asymmetric token-level reweighting via policy ratios and advantage sign, whereas we study prompt-level reweighting as a function of success probability. 

An interesting recent result \cite{davis_WhatObjectiveReasoning_2025} shows that different weighting schemes can be interpreted as optimizing different surrogate objectives. Concurrently, \cite{thrampoulidis_AdvantageShapingSurrogate_2025} specifies an explicit recipe for deriving policy-gradient algorithms as weightings of RLOO starting from a surrogate reward. However, both perspectives leave open the practical question of which surrogate (equivalently, which weighting) should be preferred in different regimes. We address this directly: from a practitioner's viewpoint, we clearly demonstrate settings in which different weightings beat the dominant GRPO baseline; from a theory viewpoint, we use a policy-dynamics perspective to provide guidance on selecting weights that account for the prompt-difficulty distribution, showing in particular that asymmetric weights are most beneficial in the rare-success regime.  Finally, while our Linear-R weighting is partly inspired by the log surrogate advocated by \citet{davis_WhatObjectiveReasoning_2025}, their proposed implementation via rejection sampling differs from Linear-R and is empirically less competitive (see Appendix ~\ref{sec:graddir} for a comparison).

\section{Problem setup and background}

Given pairs $(x,a)$ of problem $x$ (in form of a prompt) and (typically) a reference solution/answer $a$, our goal is to maximize the expected reward, 
\begin{align}
\EX[(x,a)]{
\EX[y \sim \pi_\theta(\cdot|x)]{r(x,y,a)}
},
\end{align}
where we take expectation over problems and associated reference solutions, $\pi_\theta$ is the policy (an LLM) parameterized by $\theta$ that we optimize over, $y$ is a response of the LLM to the prompt $x$, and $r$ is a  reward for the response that is computed based on the reference answer $a$. 

For example for mathematical reasoning, the reward checks whether the suggested solution in the completion $y$ is consistent with the reference solution $a$, and for code generation, the reward checks whether a suite of tests $a$ passes. 

The most commonly used policy optimization algorithms for LLMs for RL with verifiable rewards follow the recipe:
\begin{enumerate}
\item Sample a batch of prompts $x_1,\ldots,x_B$ from a dataset.
\item For each prompt $x_j$, the current model $\theta_{\mathrm{old}}$ generates $\M$ responses (also called completions, rollouts, trajectories).
\item Compute a binary reward for each of the responses, for example with a verifier that tests whether a problem specified by $x_j$ has been solved correctly or not.
\item Update the model based on normalized rewards, called advantages, with one gradient step (on-policy training) or several gradient steps (off-policy training).
\end{enumerate}
Policy optimization algorithms such as PPO~\cite{schulman_ProximalPolicyOptimization_2017}, REINFORCE, RLOO, and GRPO differ primarily in: 
\begin{itemize}
\item \textbf{Advantage computation:} How rewards are normalized.
\item \textbf{Loss aggregation:} Sample-average (all rollouts equal, e.g., GRPO), prompt-average (each prompt equal, e.g., DAPO), or token-average.
\item \textbf{Off-policy handling:} On-policy training uses each batch for one update; reusing rollouts for multiple updates becomes off-policy. Algorithms address this mismatch via importance weighting/clipping/KL-penalties, or ignore it.
\end{itemize}
In this paper we focus on an on-policy setup, where each update is a single policy-gradient step based on per-prompt samples and their advantages. 

As discussed below, several widely used policy-optimization methods for LLMs can be interpreted as inducing prompt-level weightings. 
For binary rewards, methods such as GRPO and RLOO yield weights that only depend on the average reward $\hat \rho_x$ of the responses for a prompt $x$ and are symmetric in that prompts that are easy (low $\hat \rho_x$) or  hard (high $\hat \rho_x$) under the current policy receive smaller weight than prompts with intermediate success probability. 

Motivated by the observation that learning can be bottlenecked by hard prompts, we propose and study \textbf{asymmetric} prompt weights that deliberately assign higher weight to poorly-performing prompts (small $\hat \rho_x$). We find  asymmetric reweighting improves learning in from-scratch RL setups. 

\subsection{Policy optimization for verifiable rewards}
\label{sec:policyopt}

We consider rewards bounded in $[0,1]$ for convenience. For a prompt $x$, let $y \sim \pi_\theta(\cdot | x)$ be the response and define the expected reward 
\begin{align}\label{eq:rho_x theta}
\rho_x(\theta) = \EX[y \sim \pi_{\theta}(\cdot | x)]{r(y)},
\end{align}
where, for notational simplicity, we omit the dependence of the reward $r$ on a reference answer $a$ and the prompt $x$. 
The canonical per-prompt policy gradient direction is 
\begin{align*}
\nabla_\theta \rho_x(\theta) 
=
\EX[y \sim \pi_{\theta}(\cdot | x)]{r(y) \nabla_\theta \log \pi_\theta(y|x) }. 
\end{align*}
We view policy-gradient updates as aggregating these per-prompt directions with prompt- and policy-dependent scalar weights $w_x(\theta)$, i.e.,
\begin{align*}
\nabla_\theta J(\theta)
=
\EX[x]{w_x(\theta) \nabla_\theta \rho_x(\theta) }.
\end{align*}
Different policy-optimization algorithms correspond to different choices of weights, and different finite-sample estimators $\hat d_x(\theta)$ of the per-prompt direction $\nabla_\theta \rho_x$. 
For example:
\begin{itemize}
\item Williams' REINFORCE~\cite{williams_SimpleStatisticalGradientfollowing_1992} uses weight $w_x=1$ and estimates the per-prompt direction $\nabla_\theta \rho_x$ as 
\begin{align*}
\hat d_x(\theta)
=
\frac{1}{\M} \sum_{i=1}^\M r(y_i)\nabla_\theta \log \pi_\theta(y_i|x).
\end{align*}
\item  
REINFORCE leave-one-out (RLOO)~\cite{kool_EstimatingGradientsDiscrete_2020,ahmadian_BackBasicsRevisiting_2024} also uses weight $w_x=1$ but uses a leave-one-out estimate of the per-prompt direction:
\begin{align*}
\hat d_x(\theta)
=
\frac{1}{\M} \sum_{i=1}^\M
\left( r(y_i) - \bar r_{-i}\right)
\nabla_\theta \log \pi_\theta(y_i|x), 
\end{align*}
where 
$\bar r_{-i} = \frac{1}{\M-1} \sum_{j\neq i} r(y_j)$
is the average reward of all completions but completion $y_i$. 
\item GRPO~\cite{shao_DeepSeekMathPushingLimits_2024} uses weight $w_x = 1/\sigma_x$, where $\sigma_x^2$ is the empirical variance of the rewards $r(y_1),\ldots,r(y_\M)$, and estimates the direction as 
\begin{align*}
\hat d_x(\theta)
=
\frac{1}{\M} \sum_{i=1}^\M
\left( r(y_i) - \hat \rho \right)
\nabla_\theta \log \pi_\theta(y_i|x), 
\end{align*}
where $\hat \rho$ is the average of rewards. 
Dr. GRPO~\cite{liu_UnderstandingR1ZeroLikeTraining_2025c} uses the same directional estimate, but weight $w_x=1$. 
\end{itemize}

\subsubsection{Binary rewards}\label{sec:binary rewards}

For binary rewards, the direction estimates simplify. Define 
$
\widehat \nabla_0 = \frac{1}{\M_0} \sum_{i: r_i = 0} \nabla_\theta \log \pi_\theta(y_i|x)$ and 
$\widehat \nabla_1 = \frac{1}{\M_1} \sum_{i: r_i = 1} \nabla_\theta \log \pi_\theta(y_i|x)$ 
as the average of the gradients with reward $0$ and reward $1$. Here, $\M_0$ and $\M_1$ are the number of rewards that are $0$ and $1$, and $r_i:=r(y_i)$.
With this notation, the gradient updates are:
\begin{itemize}
\item REINFORCE:
$
\widehat G_x(\theta)
= w_x \cdot \hat d_x(\theta) 
= 1 \cdot 
\rhohat\, \widehat \nabla_1\,.
$
\item RLOO:
\begin{align*}
\hspace{-0.5cm}\widehat G_x(\theta)
= w_x \cdot \hat d_x(\theta) 
= 1 \cdot \frac{\M}{\M -1} \hat \rho (1-\hat \rho) \left(\widehat \nabla_1 - \widehat \nabla_0 \right).
\end{align*}
\item GRPO:
\begin{align*}
\hspace{-0.5cm}\widehat G_x(\theta)
=
w_x \cdot \hat d_x(\theta) 
= \frac{1}{\sqrt{\hat \rho (1-\hat \rho)}} \cdot  \hat \rho (1-\hat \rho) \left(\widehat \nabla_1 - \widehat \nabla_0 \right).
\end{align*}
\end{itemize}
For binary RLOO and GRPO, we can view 
$\hat \rho(1-\hat \rho)$ and $\sqrt{\hat \rho(1-\hat \rho)}$ as the effective prompt weight. 
Moreover, the GRPO direction is (up to the constant $\frac{\M}{\M-1}$) the same as for RLOO, only the weight $w_x$ is 
different. 

For both RLOO and GRPO, prompts that are hard given the current policy ($\hat \rho$ close to zero) as well as prompts that are easy ($\hat \rho$ close to one) get de-emphasized. 
This makes sense, given the viewpoint that for hard prompts gradients are unreliable and for easy ones, we see diminishing returns.

\subsubsection{Surrogate reward perspective}
\label{sec:surrogate reward perspective}
\citet{davis_WhatObjectiveReasoning_2025,thrampoulidis_AdvantageShapingSurrogate_2025} noted that the different weights induce different surrogate-loss functions. 
In particular, REINFORCE and RLOO correspond to optimizing the expected reward
$
J(\theta) 
=
\EX[x]{ \rho_x}
$,
while GRPO corresponds to a  surrogate objective of the form 
$
J(\theta) 
=
\EX[x]{ F(\rho_x) 
},
$
for $F(\rho_x)=2 \mathrm{arcsin}(\sqrt{\rho_x})$.
This holds only for binary rewards; for non-binary rewards, no surrogate loss exists.
\citet{thrampoulidis_AdvantageShapingSurrogate_2025} suggests designing binary-reward RLVR updates by choosing differentiable surrogate reward $F$ and up-weighting the RLOO gradient $\omega_x(\rho)\leftarrow F'(\rho)$. Here, we work directly with the weight function $\omega_x(\rho)$, which provides greater flexibility beyond the binary reward setting. More importantly, we propose concrete new weightings and provide detailed empirical and theoretical analysis showing precisely when and why asymmetric weights outperform existing baselines such as GRPO.

\section{Policy optimization with asymmetric prompt weightings for binary verifiable rewards}\label{sec:assymetric}

As discussed, REINFORCE, RLOO, and GRPO leverage prompt weightings that can be viewed as focusing on ambiguous prompts, i.e., they assign high weights to prompt with intermediate success probability, while downgrading easy and hard ones, which can stabilize training, but may also lead to stagnation for hard prompts. 

Here we propose a weighting that focuses on progress by upweighting prompts with low success probabilities, i.e., low values of $\rho_x$ under the current policy. 

As introduced in the previous section, we focus on general RLVR algorithms with gradient updates for a given prompt $x$ consisting of a per-prompt weight $\omega_x$ and finite-sample estimate $\hat d_x(\theta)$ for the per-prompt direction $\nabla_\theta \rho_x$, i.e., 
\begin{align}
\label{eq:weighted emp}
    \Ghat_x 
    = 
    \omega_x(\rhohat)\cdot 
    \hat d_x(\theta).
\end{align}
We focus on the binary-reward case and take the finite-sample estimate $\hat d_x(\theta)$ for the per-prompt direction as the gradient direction of RLOO and GRPO:
\begin{align}\label{eq:graddir}
    \hat d_x(\theta)
    =
    \rhohat(1-\rhohat)\cdot \left(\widehat \nabla_1-\widehat \nabla_0\right).
\end{align}
Equivalently, written in the conventional advantages-form:
\begin{align}\label{eq:weighted emp advantage form}
    &\Ghat_x = \frac{1}{M}\sum_{i=1}^{\M} A_i\cdot \nabla_\theta\log\pi_\theta(y_i|x), \,\qquad\text{ with } A_i=\omega_x(\rhohat)\cdot(r_i-\rhohat)\,.
\end{align}

To understand the benefit of asymmetric prompt weightings, we consider four  weightings that all upweight prompts with low success probabilities relative to GRPO and RLOO, and with a naming that reflects the effective weights that they assign gradients (see Figure~\ref{fig:gradient_weights}):
\begin{itemize} 
\item \textbf{Linear-REINFORCE (Linear-R)}: We consider weighting $w_x(\rho) = \frac{1}{\rho}$, which focuses on failing prompts, and corresponds to the effective (linear) weight $1-\rho$. 
In terms of advantages, in view of Equation~\eqref{eq:weighted emp advantage form}, this corresponds to  $A_i=\frac{1}{\rhohat}(r_i-\rhohat)$. For wrong responses ($r_i=0$) Linear-R assigns advantage $A_i=-1$; thus, the algorithm's overall gradient is non-zero even if $\rhohat=0$.
\item \textbf{Sqrt-REINFORCE (Sqrt-R)}: 
Consider the weight $w_x(\rho) = \frac{1}{\rho \sqrt{1-\rho}}$, which has the $1/\rho$ weighting dominating for small $\rho$, just like Linear-REINFORCE, and for $\rho$ close to one, behaves like the GRPO weighting. 
The effective weight is $\sqrt{1-\rho}$. Just like Linear-$R$, this assigns non-zero effective weight even if $\rhohat=0$
\item \textbf{Plateau-REINFORCE (Plateau-R):}
Consider the weight 
\begin{align}
w_x(\rho)
= 
\begin{cases}
\frac{1}{2\rho(1-\rho)} & \rho < 1/2 \\
\frac{1}{\sqrt{ \rho(1-\rho)}} & \rho \geq 1/2
\end{cases}.
\end{align}
This gives an effective weight that is equal to that of GRPO for $\rho>1/2$ and constant for smaller $\rho$. 
The weight function above corresponds to advantages $A_i=\omega_x(\rho)\cdot (r_i-\rho)$ which  assign non-zero gradient weights $-1/2$ for prompts with $\rho=0$. 

\item \textbf{Uniform-REINFORCE (Uniform-R):} 
Consider the weight $w_x(\rho) = \frac{1}{\rho (1 - \rho)}$, which leads to an effective weight of $1$. We dub this Uniform-REINFORCE because it has effective weight $1$, since it cancels the factor $\rho (1 -  \rho)$, and thus weighs all prompts equally.
\end{itemize}

The weights assigned to gradients and corresponding advantages (for binary rewards) are summarized in Table \ref{tab:effective_weights_final} and illustrated in Figure~\ref{fig:gradient_weights}. 


\begin{figure*}[tb]
\def\rholist{0.0625,0.125,0.1875,0.25,0.3125,0.375,0.4375,0.5,0.5625,0.625,0.6875,0.75,0.8125,0.875,0.9375}

\pgfplotsset{
  agrpo/.style={thick,brown},
  grpo/.style={thick,densely dotted,blue},
  rloo/.style={thick,dashed,green},
  logreinforce/.style={thick,red},
  logodds/.style={thick,dashdotted,violet},
  effective/.style={thick,orange},
}

\begin{tikzpicture}
\begin{groupplot}[
  group style={group size=3 by 1, horizontal sep=1.5cm},
  width=4.5cm, height=4.5cm,
  grid=both,
  xlabel={$\rho$},
]

\nextgroupplot[
  xmin=0.03125,xmax=0.96875,
  ymin=0,ymax=1.05,
  ylabel={weight},
  domain=0:1, samples=400,
  legend columns=1,
  legend style={
    at={(4.1,0.5)},          
    anchor=west,   
  },
]

\addplot[effective] {ifthenelse(x < 0.5, 0.5, sqrt(x*(1-x)))}; 
\addlegendentry{Plateau-R}

\addplot[logreinforce] {1-x};
\addlegendentry{Linear-R}

\addplot[agrpo]{sqrt(1-x)};
\addlegendentry{Sqrt-R}

\addplot[logodds] {1};
\addlegendentry{Uniform-R}

\addplot[grpo] {sqrt(x*(1-x))};
\addlegendentry{GRPO}

\addplot[rloo] {x*(1-x)};
\addlegendentry{RLOO}

\nextgroupplot[
  xmin=0.03125,xmax=0.96875,
  ymin=0,ymax=16.5,
  ylabel={$A^+(\rho)$},
  samples at={\rholist},
]
\addplot[grpo] {sqrt((1-x)/x)};      
\addplot[rloo] {1-x};               
\addplot[logreinforce] {(1-x)/x};   
\addplot[logodds] {1/x};            
\addplot[effective] {ifthenelse(x < 0.5, 1/(2*x), sqrt((1-x)/x))};  
\addplot[agrpo]{ (1-x)/(x*sqrt(1-x))};

\nextgroupplot[
  xmin=0.03125,xmax=0.96875,
  ymin=-16.5,ymax=0,
  ylabel={$A^-(\rho)$},
  samples at={\rholist},
]
\addplot[grpo] {-sqrt(x/(1-x))};    
\addplot[rloo] {-x};                
\addplot[logreinforce] {-1};        
\addplot[logodds] {-1/(1-x)};       
\addplot[effective] {ifthenelse(x < 0.5, -1/(2*(1-x)), -sqrt(x/(1-x)))};  
\addplot[agrpo]{ -x/(x*sqrt(1-x))};

\end{groupplot}
\end{tikzpicture}
\vspace{-0.2cm}
\caption{
\label{fig:gradient_weights} 
Effective weights $\omega_x(\rho)\cdot \rho(1-\rho)$ assigned to gradients, and advantages assigned to correct/wrong responses for the five considered prompt weightings.  
The range $1/32,\ldots,31/32$ is shown since the number of rollouts $\M$ is typically at most $32$. 
}
\end{figure*}
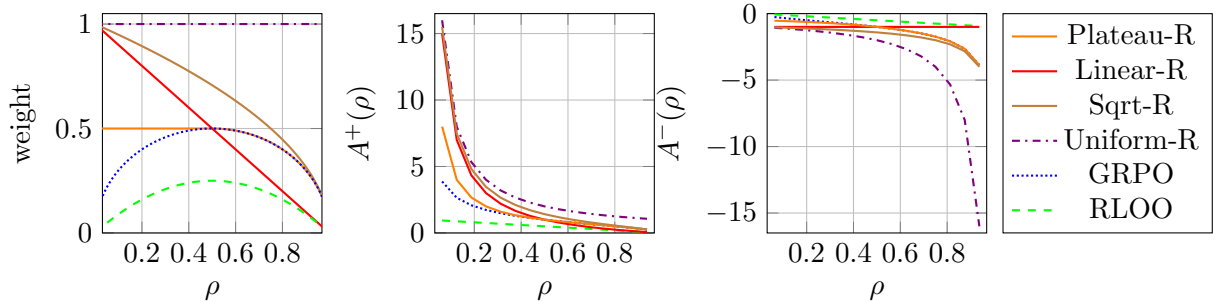

\section{Experiments}

We perform experiments on four different datasets and setups, each covering different regimes. 

We first consider two from-scratch RL reasoning setups. 
Both setups are in a R1-Zero-like~\cite{deepseek-ai_DeepSeekR1IncentivizingReasoning_2025} regime where we start from a base model that initially does not perform well on reasoning, and eventually reach high performance through RL. 
The first setup is TinyZero~\cite{pan_TinyZero_2025} where the task is to solve a countdown and multiplication task. The second setup is GSM8K~\cite{cobbe_TrainingVerifiersSolve_2021}, a math reasoning task, where we start from a base model not trained or finetuned specifically for math. 
In both setups, the base model's average reward starts below 0.02 and advances to around 0.8. We find that our proposed asymmetric prompt weightings (Sqrt-R, Plateau-R, and Linear-R) outperform the non-symmetric ones (GRPO, RLOO, Uniform-R) in both cases.

Second, we consider two post-SFT RL setups where the model is already good at reasoning through SFT, and the goal of RL is to improve performance further. The first uses the DAPO-math dataset~\cite{yu_DAPOOpenSourceLLM_2025}, and the second the MATH dataset~\cite{hendrycks_MeasuringMathematicalProblem_2021}. In the two setups the average reward starts at 0.3 and 0.4 and reach 0.5 and 0.55, respectively. Unlike the from-scratch setting, we observe no notable difference among the advantage estimators GRPO, RLOO, Plateau-R, and Linear-R.

We use the SkyRL codebase~\cite{cao2025skyrl}, and have implemented our advantage estimators withing that framework; experimental details are in the text and Appendix~\ref{app:experimental_details}. 

\subsection{From-scratch RL: TinyZero}
We begin with the TinyZero setup proposed by \citet{pan_TinyZero_2025}, training the Qwen2.5-3B base model to solve countdown and multiplication tasks. An illustrative prompt is:

{\small
\begin{verbatim}
<|im_start|>system
You are a helpful assistant. You first think 
about the reasoning process in your mind 
and then provides the user with the answer.
<|im_end|>
<|im_start|>user
Using the numbers [3,6,25,50,75,100], create 
an equation that equals 952. You can use 
basic arithmetic operations (+, -, *, /) 
and each number can only be used once. Show
your work in <think> </think> tags. And 
return the final answer in <answer> 
</answer> tags, for example 
<answer> (1 + 2) / 3 </answer>.
<|im_end|> <|im_start|>assistant
Let me solve this step by step.
<think>
\end{verbatim}
}

We use a binary outcome reward only. Original TinyZero~\cite{pan_TinyZero_2025} uses a format reward of 0.1 when the format is correct, in addition to an outcome  reward; however, we find this unnecessary to reach good performance. 
We choose this setup because, as noted by~\cite{pan_TinyZero_2025}, it is a minimal setup where one can experience the ``Aha moment'' reported in the R1-paper~\cite{deepseek-ai_DeepSeekR1IncentivizingReasoning_2025}, where the model starts to have a productive thinking process. 
We choose $\M=16$ rollouts per prompt and a batch size of 512 (i.e., 512 distinct prompts, and for each prompt $\M=16$ completions). 

Figure~\ref{fig:tinyzerocomparison} shows the test loss (Pass@1) for different weightings: The model's initial performance is very low (about $0.025$) and  climbs  to $0.8$ for Linear-R and Plateau-R, compared to $0.74$ for the other algorithms RLOO, GRPO, and Uniform-R. The asymmetric prompt weightings Plateau-R and Linear-R perform significantly better (6\% higher Pass@1 score), and this improvement is reproducible: the plot shows three independent runs for Linear-R. 

We hypothesize the following explanation for this behavior: All algorithms perform similar up to step around 150-200, where the score is about $0.74$. From here on, the asymmetric prompt weightings (Linear-R, Plateau-R, Sqrt-R) continue to improve to $0.8$ while the symmetric ones (RLOO, GRPO, and Uniform-R) saturate. 
Figure~\ref{fig:rhohatdistributionTinyZero} shows the distribution of the values of $\hat \rho$ (for Linear-R, though the other distributions are essentially the same at Step 220): at Step 220 there are still examples with very low values for $\hat \rho$. GRPO and RLOO will assign very small weights to the corresponding gradients (see Figure~\ref{fig:gradient_weights}), as opposed to Linear-R, Plateau-R, Sqrt-R which assigns higher weights and thus continues to make progress on those examples, translating in an overall better score. 

Thus, upweighting gradients for low-$\rho$ prompts can translate in performance gains in settings where many prompts remain difficult to solve and thus have low $\rho$. In Section~\ref{sec:theory}, we provide theoretical support for this finding by showing, under a tractable model of learning dynamics, that the Linear-R weighting is optimal in the low-$\rho$ regime.

Finally, recall that our asymmetric weightings (Linear-R, Plateau-R, Sqrt-R) assign non-zero advantages even if all completions have zero reward ($\hat \rho=0$), unlike GRPO and RLOO. We conducted an ablation in which we set the advantage to zero whenever all completions failed; this hurt performance, see the appendix.

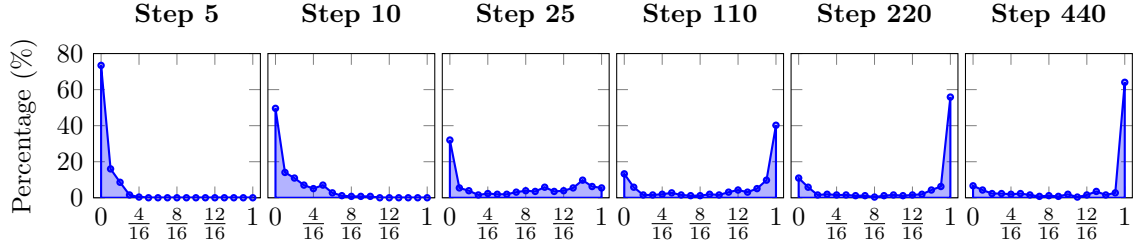
\begin{figure*}[bt]
\begin{center}
\begin{tikzpicture}

\pgfplotstableread{
bin_start step_5 step_10 step_25 step_110 step_220 step_440
0.0000 73.44 49.61 32.03 13.28 10.94 6.64
0.0625 16.02 14.06 5.47 5.86 5.86 4.30
0.1250 8.59 10.94 3.91 1.56 1.56 2.34
0.1875 1.56 7.03 1.56 1.56 1.95 2.34
0.2500 0.39 5.08 2.34 1.95 1.56 1.95
0.3125 0.00 7.03 1.95 2.73 1.56 2.34
0.3750 0.00 2.73 1.95 1.56 1.17 1.56
0.4375 0.00 1.17 3.12 1.17 1.17 0.78
0.5000 0.00 0.78 3.91 1.17 0.39 1.17
0.5625 0.00 0.78 3.52 1.95 1.17 0.78
0.6250 0.00 0.78 5.86 1.56 1.56 1.95
0.6875 0.00 0.00 3.52 3.12 1.17 0.39
0.7500 0.00 0.00 3.91 4.30 1.56 1.56
0.8125 0.00 0.00 5.47 3.12 1.95 3.52
0.8750 0.00 0.00 9.77 5.08 4.30 1.56
0.9375 0.00 0.00 6.25 9.77 6.25 2.73
1.0000 0.00 0.00 5.47 40.23 55.86 64.06
}\datatable

\begin{groupplot}[
    group style={
        group size=6 by 1,
        horizontal sep=0.7cm,
        ylabels at=edge left,
        yticklabels at=edge left,
        horizontal sep=0.1cm,
    },
    width=3.8cm,
    height=3.5cm,
    ymin=0,
    ymax=80,
    ylabel near ticks,
    xlabel near ticks,
    xtick={0,0.25,0.5,0.75,1},
    xticklabels={$0$,$\frac{4}{16}$,$\frac{8}{16}$,$\frac{12}{16}$,$1$},
    xmin=-0.05,
    xmax=1.05,
    tick label style={font=\small},
    label style={font=\small},
    title style={font=\small\bfseries},
]

\nextgroupplot[title={Step 5}, ylabel={Percentage (\%)}]
\addplot[thick, blue, mark=*, mark size=1pt, fill=blue, fill opacity=0.3] table[x=bin_start, y=step_5] {\datatable} \closedcycle;

\nextgroupplot[title={Step 10}]
\addplot[thick, blue, mark=*, mark size=1pt, fill=blue, fill opacity=0.3] table[x=bin_start, y=step_10] {\datatable} \closedcycle;

\nextgroupplot[title={Step 25}]
\addplot[thick, blue, mark=*, mark size=1pt, fill=blue, fill opacity=0.3] table[x=bin_start, y=step_25] {\datatable} \closedcycle;

\nextgroupplot[title={Step 110}]
\addplot[thick, blue, mark=*, mark size=1pt, fill=blue, fill opacity=0.3] table[x=bin_start, y=step_110] {\datatable} \closedcycle;

\nextgroupplot[title={Step 220}]
\addplot[thick, blue, mark=*, mark size=1pt, fill=blue, fill opacity=0.3] table[x=bin_start, y=step_220] {\datatable} \closedcycle;

\nextgroupplot[title={Step 440}]
\addplot[thick, blue, mark=*, mark size=1pt, fill=blue, fill opacity=0.3] table[x=bin_start, y=step_440] {\datatable} \closedcycle;
\end{groupplot}
\end{tikzpicture}
\end{center}
\vspace{-0.55cm}
\caption{
\label{fig:rhohatdistributionTinyZero}
Distribution of the fraction of correct responses out of 16, $\hat{\rho}_x$, for each prompt during a run of Linear-R on TinyZero. At early steps, the distribution is heavily concentrated at low $\hat{\rho}$, reflecting that most prompts are difficult for the base model. Importantly, a substantial number of difficult prompts remain even at later steps (e.g., step 220). Linear-R continues to provide strong gradient signal for these prompts due to its weighting, unlike GRPO.
}
\end{figure*}

\begin{figure}[htb]
    \centering
\begin{tikzpicture}
    \begin{groupplot}[
        group style={
            group size=2 by 1,
            horizontal sep=1cm,
        },
        height=5cm,
        grid=major,
        grid style={dashed, gray!30},
    ]
    
    \nextgroupplot[
        ybar,
        width=5cm,
        ylabel={Pass@1},
        symbolic x coords={B=128, B=256},
        xtick=data,
        ymin=0.65,
        ymax=0.81,
        bar width=6pt,
        enlarge x limits=0.4,
        ymajorgrids=true,
        xmajorgrids=false,
    ]

    \addplot[fill=orange!70] coordinates {(B=128, 0.792) (B=256, 0.783)};

    \addplot[fill=red!70] coordinates {(B=128, 0.753) (B=256, 0.770)};

    \addplot[fill=brown!70] coordinates {(B=128, 0.764) (B=256, 0.762)};

    \addplot[fill=white!70,draw=white] coordinates {(B=128, 0.792) (B=256, 0.783)};

    \addplot[fill=blue!70] coordinates {(B=128, 0.697) (B=256, 0.7392)};
    
    \nextgroupplot[
        width=5.5cm,
        xlabel={Step},
        legend style={
            at={(0.98,0.02)},
            anchor=south east,
            font=\small,
        },
        table/col sep=comma,
    ] 
    
    \addplot[orange, thick] table[x=Step, y=a_reinforce] {./fig/gsm8k_B128.csv};
    \addlegendentry{Plateau-R}

    \addplot[red, thick] table[x=Step, y=log_reinforce] {./fig/gsm8k_B128.csv};
    \addlegendentry{Linear-R}

    \addplot[brown, thick] table[x=Step, y=agrpo] {./fig/gsm8k_B128.csv};
    \addlegendentry{Sqrt-R}

    \addplot[blue, thick] table[x=Step, y=grpo] {./fig/gsm8k_B128.csv};
    \addlegendentry{GRPO}
    
    \end{groupplot}
\end{tikzpicture}
    \caption{
    From-scratch RL: GSM8K.  
    Test error (i.e., fraction of correctly solved problems, Pass@1) during reinforcement learning on the GSM8K benchmark for the Llama-3.1-8B (base) model. Asymmetric weighting (Plateau-R, Linear-R, Sqrt-R) outperform GRPO, demonstrating the benefit of up-weighting difficult prompts.
    }
    \label{fig:gsm8kcomparison}
\end{figure}
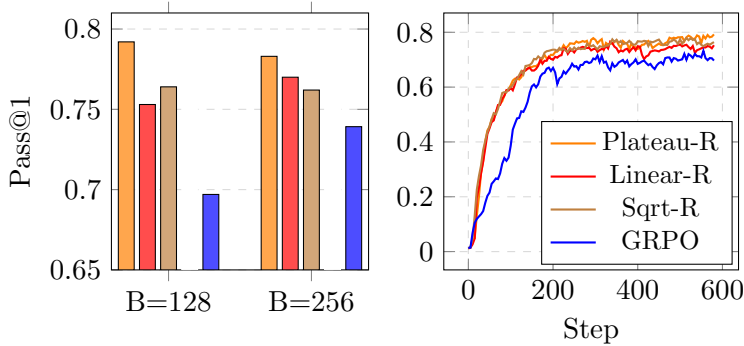

\subsection{From-scratch RL: GSM8K}

Next, we consider the GSM8K dataset. 
We use the Llama-3.1-8B-base model (as opposed to Qwen base models as common in the literature), since Qwen models have been trained on data to improve their math performance, and we wish to understand what is learned through policy optimization algorithms with different weightings/advantages. 
We performed experiments with $\M = 16$ completions, learning rate of 1e-6, and batch sizes of 128 and 256. 
The base model performs, as expected, relatively poorly initially, only achieving a Pass@1 rate of 1.3\%. In this regime, most prompts have no correct completions ($\rho=0$), and dominate the gradient for Plateau-R and Linear-R. We therefore set the advantages of Plateau-R and Linear-R for prompts with success rate $\rho=0$ to $0$ for a warmup period of 50/100 steps for batchsizes 256/128. 

Figure~\ref{fig:gsm8kcomparison} shows that Plateau-R slightly outperforms GRPO for both batch-sizes, demonstrating the benefit of asymmetric prompt weighting. 

This setup follows that considered in the blog post~\cite{schulman_LoRARegret_2025} where the stepsize was tuned for GRPO; and we achieve the same performance with GPRO in our setup.


\subsection{Post-SFT RL: MATH}
\label{sec:postsftmath}

Next, we consider the MATH dataset~\cite{hendrycks_MeasuringMathematicalProblem_2021} consisting of 12k training examples and 500 test examples. Each task is a math problem of varying difficulty levels (1-5) with an answer that is easy to verify, for example a number or fraction. The binary verifier simply compares the solution suggested by the LLM with the provided solution, being lenient with regards to formatting (e.g., \verb|\frac{13}{21}| is the same as $13/21$). 
MATH is frequently used for evaluating GRPO-style algorithms since it provides a challenging set of problems with an automatic and final-answer checker as reward. 

We use the Llama-3.2-3b-instruct model rather than models from the Qwen-series (as common in the literature) because, on MATH, Qwen trained with GRPO-style RLVR improves even with random, incorrect, or format-only rewards~\cite{shao_SpuriousRewardsRethinking_2025}, suggesting the benchmark signal may be confounded by model-specific priors and/or contamination. 
Thus, using Qwen would make it difficult to attribute performance differences to differences in advantage design. 
We choose $M=16$ and our batch-size is 256 (256 distinct math problems and for each $\M=16$ completions). 

From the test-loss (Pass@1) during training for four epochs in Figure~\ref{fig:mathcomparison}(left) it can be seen that the performance of all advantage estimators is very similar. Starting from a score of $0.38$, all variants achieve scores in the range $0.53-56$. See Figure~\ref{fig:rhohatdistribution_math} for the distributions of $\hat \rho$ during training with Linear-R. For TinyZero, we saw that Linear-R improves over the other advantage estimators in the second half of training by improving on examples with low values of $\rho$. For MATH, we don't see such an improvement which is somewhat expected, considering that at step 90 there are almost no examples with close-to-zero (but not equal to zero) values of $\hat \rho$. 

\begin{figure}[tbh]
    \centering
    \begin{tikzpicture}
    \begin{groupplot}[
        group style={
            group size=2 by 1,
            horizontal sep=2cm,
        },
        height=5.5cm,
        width=7.5cm,
        grid=major,
        grid style={dashed, gray!30},
    ]
    
    \nextgroupplot[
        xlabel={Step},
        ylabel={Pass@1},
        legend style={
            at={(0.98,0.02)},
            anchor=south east,
            font=\small,
        },
        table/col sep=comma,
    ]
    
    \addplot[red, thick] table[x=Step, y=log_reinforce] {./fig/math_comparison.csv};
    \addlegendentry{Linear-R}
    \addplot[blue, thick] table[x=Step, y=grpo] {./fig/math_comparison.csv};
    \addlegendentry{GRPO}
    \addplot[purple, thick] table[x=Step, y=c_reinforce] {./fig/math_comparison.csv};
    \addlegendentry{Uniform-R}
    \addplot[green, thick] table[x=Step, y=rloo] {./fig/math_comparison.csv};
    \addlegendentry{RLOO}

    \nextgroupplot[
        xlabel={Step},
        ylabel={Pass@1},
        legend style={
            at={(0.98,0.02)},
            anchor=south east,
            font=\small,
        },
        table/col sep=comma,
    ]
    
    \addplot[orange, thick] table[x=Step, y=a_reinforce] {./fig/dapo_math.csv};
    \addlegendentry{Plateau-R}
    \addplot[red, thick] table[x=Step, y=log_reinforce] {./fig/dapo_math.csv};
    \addlegendentry{Linear-R}
    \addplot[brown, thick] table[x=Step, y=agrpo] {./fig/dapo_math.csv};
    \addlegendentry{Sqrt-R}
    \addplot[blue, thick] table[x=Step, y=grpo] {./fig/dapo_math.csv};
    \addlegendentry{GRPO}

    \end{groupplot}
\end{tikzpicture}
    \caption{Post-SFT RL:
    Test error (i.e., fraction of correctly solved problems) during reinforcement learning on the MATH dataset \textbf{(left)} and on DAPO math \textbf{(right)}. 
    All methods performs similarly well.
    }
    \label{fig:mathcomparison}
    \label{fig:dapo_math_comparison}
\end{figure}
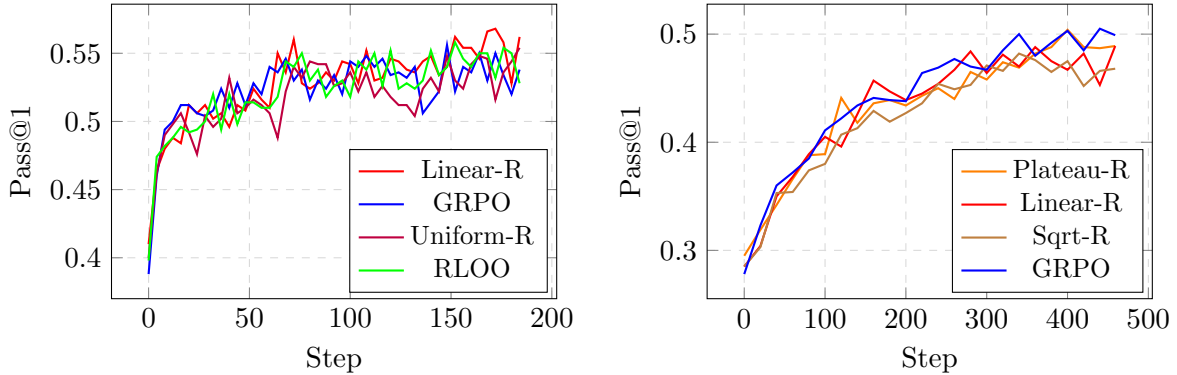

\subsection{Post-SFT RL: DAPO math}
\label{sec:DAPOmath}

Finally, we consider a post-SFT RL setup where we take a model that is already very good at reasoning through supervised finetuning (SFT) on reasoning traces, and improve it further through RL. We choose this setup because models are often SFT'd first and then further improved through RL. 

We follow closely the JustRL setup~\cite{he_JustRLScaling15B_2025} for its simple and clean setup, and so that we can start from well-chosen hyperparameters. 
We train the DeepSeek-R1-Distill-Qwen-1.5B model, and use the DAPO-math dataset (the english subset)~\cite{yu_DAPOOpenSourceLLM_2025} for training. The batch size is 256 prompts, and we take $\M=8$ samples per prompt. The learning rate is 1e-6 and we perform on-policy training without KL penalty nor clipping. 

Figure~\ref{fig:dapo_math_comparison}(right) shows the results: Similar as in Section~\ref{sec:postsftmath} for MATH; all four considered methods perform similarly well, and there is no benefit of up-weighting the difficult prompts (Plateau-R, Linear-R, Sqrt-R), but it also does not hurt performance.

\section{A policy-dynamics perspective to  weighting}\label{sec:theory}

Here, we ask: How do different prompt weights $\omega_x$ shape the dynamics of RLVR. To answer this, we study how the success probability  evolves over training and how different choices of weight functions $\omega_x$ in Equation~\eqref{eq:weighted emp advantage form} induce qualitatively different convergence behaviors. We build on the policy-dynamics analysis of \citet{mroueh_ReinforcementLearningVerifiable_2025}, extending it using the general reweighting framework of Section \ref{sec:assymetric} to cover a broad family of policy optimization algorithms.

We find that in a low-success regime an asymmetric prompt weighting (Sqrt-R) is best, and otherwise GRPO is better; which supports our empirical finding that asymmetric prompt weightings are best in from-scratch RL setups that start from a low-success regime.  

\subsection{Theoretical model}
To isolate the effect of the weighting $\omega_x$ on the learning dynamics, we consider an idealized, population-level
version of policy optimization objectives for RLVR that captures the core ingredients relevant for dynamics: binary rewards,
response-level advantages, and a proximal regularization that prevents overly large policy updates. We consider the following policy optimization per prompt $x$ at each iteration $t$ of the algorithm:
\begin{align}
    \label{eq:start_opt}
\max_{\pi_t(\cdot|x)}\;\; \E_{y\sim \pi_t(\cdot|x)} A_y(\rho_{t-1})  - \beta D_{\text{KL}}\big(\pi_t(\cdot|x)||{\pi_{t-1}(\cdot|x)}\big),
\end{align}
where $A_y(\rho)=\omega_x(\rho)\cdot (r(y)-\rho)$ is the advantage and $\omega_x$ the weight function, as throughout (cf. Equation~\eqref{eq:weighted emp advantage form}). 
Here, $\pi_t$ denotes the policy to be optimized at time $t$ and $\pi_{t-1}$ is the policy of the previous iteration. Unlike the rest of the paper, here we consider the population limit over generated responses $M\rightarrow\infty$; thus, the advantage $A_y(\rho_{t-1})$ of a response $y$ depends on the (population) success rate $\rho_{t-1}$ of the sampling policy $\pi_{t-1}$ at the previous iteration.  
 $D_{\text{KL}}$ is the KL divergence and $\beta>0$ a regularization parameter. 

Optimizing directly over the policy distribution,
rather than over a parametric representation is standard abstraction in the theoretical RL literature, e.g., \citep{rafailov_DirectPreferenceOptimization_2023,mroueh_ReinforcementLearningVerifiable_2025,vojnovic_WhatAlignmentObjective_2025,roux_TaperedOffPolicyREINFORCE_2025}; it allows us to focus on how probability mass shifts between correct and incorrect responses over time.

\subsection{Success-rate dynamics}

As shown in Appendix~\ref{sec:get rho_t ODE} by taking the continuous time limit to the solution of \eqref{eq:start_opt}, the success rate evolves according to the ordinary differential equation (ODE):
\begin{align}\label{eq:ODE}
\dt{\rho_t} = \frac{1}{\beta}\rho_t(1-\rho_t)\cdot \omega(\rho_t)\,.
\end{align}
Thus, the rate of improvement is governed by 
the uncertainty term $\rho_t(1-\rho_t)$ and the weight
$\omega(\rho_t)$. 
Different weights emphasize different regions of the success probabilities.

In practical RLVR, each unit of optimization time corresponds to drawing a finite number of rollouts per prompt. When the success rate $\rho$ is small, the dominant cost is not captured by the optimization clock $t$, but rather by the number of rollouts required before observing a successful response that yields a positive reward.

Under a Bernoulli outcome model, the probability of observing at least one success in $M$ rollouts is $1 - (1-\rho)^M \approx M\rho$ in the low-success regime $\rho \ll 1/M$. Thus, the expected number of rollouts per informative signal scales as $1/(M\rho)$. In this rare-success regime, one unit of population time costs on the order of $g(\rho) = 1/\rho$ rollout units. This motivates measuring progress in an \emph{effective time} variable $\tau$ that upweights time spent at small $\rho$:
\begin{align}\label{eq:effective_time}
\mathrm{d}\tau = g(\rho_t)\,\mathrm{d}t, \qquad g(\rho) = 1/\rho.
\end{align}
Applying this change of variables to Equation~\eqref{eq:ODE}, the success rate $\rho_\tau$ evolves according to the effective-time ODE:
\begin{align}
\label{eq:effective_time_ode}
\frac{\mathrm{d}\rho_\tau}{\mathrm{d}\tau} = \frac{1}{\beta}\,\rho_\tau^2(1-\rho_\tau) \cdot \omega(\rho_\tau).
\end{align}



\begin{figure}[tb]
    \centering
    \begin{tikzpicture}
\begin{groupplot}[
  group style={
    group size=4 by 1,
    horizontal sep=0.1cm,
    ylabels at=edge left,
    yticklabels at=edge left,
  },
  height=5cm,
  width=7cm,
  grid=major,
  grid style={dashed, gray!30},
  ymin=0, ymax=1.05,
  ylabel={$\rho_t/\rho_\tau$},
]

    \nextgroupplot[
        title={$\rho_0=0.1$},
        legend style={
            at={(0.98,0.02)},
            anchor=south east,
            font=\small,
        },
        table/col sep=comma,
        xlabel={time $t$},
        xmin=0,
        xmax=10,
        domain=0:10,
    ]  
    %
    %
    %
\pgfmathsetmacro{\rhoO}{0.1}

\pgfmathsetmacro{\betaRLOO}{1-\rhoO}

\pgfmathsetmacro{\betaLin}{ln(1/\rhoO)}

\pgfmathsetmacro{\zS}{sqrt(1-\rhoO)}
\pgfmathsetmacro{\betaS}{ln((1+\zS)/(1-\zS))}

\pgfmathsetmacro{\aG}{rad(asin(sqrt(\rhoO)))}
\pgfmathsetmacro{\betaG}{pi - 2*\aG}


\addplot[red, thick]
    {1 - (1-\rhoO)*exp(-x/\betaLin)};
\addlegendentry{Linear-R}

\addplot[brown, thick]
    {1 - max(0, \zS - x/(2*\betaS))^2};
\addlegendentry{Sqrt-R}

\addplot[blue, thick, densely dotted]
    {sin(deg(min(pi/2, x/(2*\betaG) + \aG)))^2};
\addlegendentry{GRPO}

\addplot[green, thick, dashed]
    {1/(1 + ((1-\rhoO)/\rhoO)*exp(-x/\betaRLOO))};
\addlegendentry{RLOO}

    \nextgroupplot[
        title={$\rho_0=0.03$},
        legend style={
            at={(0.98,0.02)},
            anchor=south east,
            font=\small,
        },
        table/col sep=comma,
        xlabel={effective time $\tau$},
        domain=0:50, samples=200
    ]  
    %

\pgfmathsetmacro{\rhoO}{0.03}
\pgfmathsetmacro{\betaRLOO}{1-\rhoO}
\pgfmathsetmacro{\betaLin}{ln(1/\rhoO)}
\pgfmathsetmacro{\zS}{sqrt(1-\rhoO)}
\pgfmathsetmacro{\betaS}{ln((1+\zS)/(1-\zS))}
\pgfmathsetmacro{\aG}{rad(asin(sqrt(\rhoO)))}
\pgfmathsetmacro{\betaG}{pi - 2*\aG}

\addplot[red, thick]
{1/(1 + ((1-\rhoO)/\rhoO)*exp(-x/\betaLin))};
\addplot[brown, thick]
{1/(cosh(max(0, 0.5*\betaS - x/(2*\betaS)))^2)};
\pgfmathsetmacro{\zG}{sqrt((1-\rhoO)/\rhoO)}
\addplot[blue, thick, densely dotted]
{1/(1 + max(0, \zG - x/(2*\betaG))^2)};
\addplot[green, thick, dashed]
table[
  x expr=\thisrow{tau}*\betaRLOO,
  y=rho_rloo,
  col sep=comma
]{./fig/rho_effective.csv};
    
    \end{groupplot}
\end{tikzpicture}

    \caption{Comparison of success-probability dynamics $\rho$ governed by the ODE in Eq.~\eqref{eq:ODE} (with $\beta=1$) for RLOO, GRPO, Linear-R and Sqrt-R, with their weights normalized  to satisfy the same total budget constraint Eq. \eqref{eq:budget} (with $B=1$). \textbf{Left:} regular time $t$  with initialization $\rho_0=0.1$. \textbf{Right:} effective time $\tau$  with initialization $\rho_0=0.03$, where $\tau$ accounts for rollout cost in the low-success regime. As predicted by Propostions \ref{prop:single} and \ref{prop:single_effective_time}, GRPO is optimal in regular time, whereas Sqrt-R is optimal in effective time in terms of time required to reach target success rate $\rho^*=1$.
    The right and left plot are obtained by plotting the dynamics stated explicitly in Propositions~\ref{prop:ode_solutions_regular_time} and \ref{prop:closed_form_tau}, using the budget normalization from Lemma~\ref{lem:budget} in the appendix.
    } 
    \label{fig:dynamics_comparison_normalized}
\end{figure}
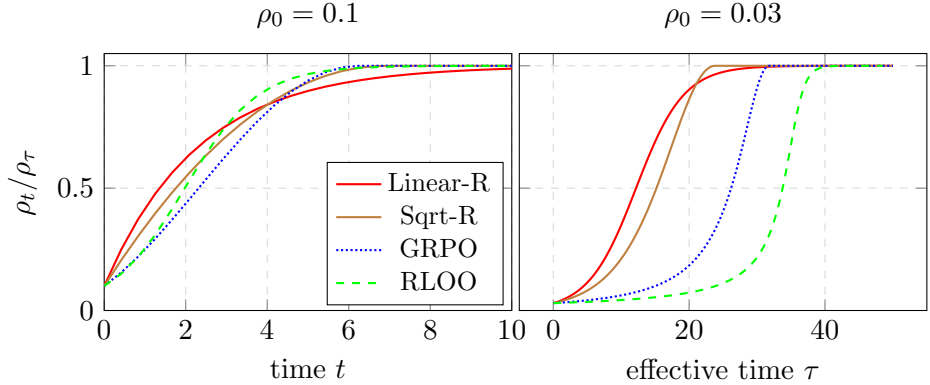

\subsection{Optimal weighting}

Given the dynamics for regular and effective time, we now ask which choice of the weight function $\omega$ leads to the fastest improvement in success probability. 
We find that for regular time GRPO is optimal, while for effective time Sqrt-R is optimal.

\paragraph{Regular-time optimality of GRPO.}
Fix prompt $x$ and let its average success probability $\rho_t$ at iteration $t$ evolve according to the ODE in Equation~\eqref{eq:ODE}. Our goal is to minimize the time required for the success rate to reach a target value $\rho_*$ from initial value $\rho_0$. To exclude trivial solutions, and to ensure a fair normalization across all weight functions, we impose a budget $B$ on the total allowed change of the weights in the interval $[\rho_0,\rho_*]$: 
\begin{align}\label{eq:budget}
\int_{\rho_0}^{\rho_*} \omega(\rho)\,\mathrm{d}\rho \leq B\,,
\end{align}
where we also consider $\omega(\rho)\geq 0$. Without loss of generality, fix $B=1$ onwards.
{Such a budget constraint allows us to fairly compare among algorithms with different weights by normalizing the total ``update budget''. Specifically, Eq. \eqref{eq:budget} bounds the cumulative magnitude of the per-prompt weighting coefficient $\omega_x(\rho)$ used in Equation~\eqref{eq:weighted emp}.
While one could alternatively normalize the optimization using the effective weight $\omega(\rho)\cdot\rho(1-\rho)$ that scales the gradient difference $\widehat\nabla_1-\widehat\nabla_0$ (see Equation \eqref{eq:graddir}), this specific alternative normalization does not qualitatively alter the main conclusion of our analysis: regular-time dynamics favor symmetric weightings, whereas effective-time dynamics in the rare-success regime favor asymmetric weightings.}
\begin{proposition}\label{prop:single}
Suppose the success rate $\rho_t$ of a fixed prompt evolves as in Equation~\eqref{eq:ODE} with initialization $\rho_0$. Let $T(\rho_0,\rho_*;\omega)$ be the time required so that $\rho_t=\rho_*$ for target success rate $\rho_*>\rho_0$. The non-negative weight $\omega:[0,1]\rightarrow\R_{\geq 0}$ that minimizes $T$ subject to the budget constraint~\eqref{eq:budget} with $B=1$ is:
    $\omega_{\text{opt}}(\rho) \propto \frac{1}{\sqrt{\rho(1-\rho)}}, \quad \text{for } \rho \in [\rho_0,\rho_*].
    $
\end{proposition}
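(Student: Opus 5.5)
The plan is to convert the hitting time into an integral functional of $\omega$ and then minimize it under the linear budget by Cauchy--Schwarz. Since $\rho_0<\rho_*$, we may assume $\omega>0$ on $[\rho_0,\rho_*]$; otherwise the right-hand side of \eqref{eq:ODE} vanishes at some point and $T=\infty$. Then $\rho_t$ is strictly increasing along the ODE \eqref{eq:ODE}, so we can change variables from $t$ to $\rho$ via $\mathrm{d}t = \beta\,\mathrm{d}\rho/(\rho(1-\rho)\omega(\rho))$. This gives
\begin{align*}
T(\rho_0,\rho_*;\omega) = \beta\int_{\rho_0}^{\rho_*} \frac{\mathrm{d}\rho}{\rho(1-\rho)\,\omega(\rho)}.
\end{align*}
The problem therefore reduces to minimizing $\int h/\omega$ with $h(\rho)=1/(\rho(1-\rho))$, subject to $\int\omega\le 1$ and $\omega\ge 0$.

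For the key step, apply Cauchy--Schwarz to the pair $\sqrt{h/\omega}$ and $\sqrt{\omega}$:
\begin{align*}
\left(\int_{\rho_0}^{\rho_*}\sqrt{h(\rho)}\,\mathrm{d}\rho\right)^2 \le \int_{\rho_0}^{\rho_*}\frac{h(\rho)}{\omega(\rho)}\,\mathrm{d}\rho\cdot\int_{\rho_0}^{\rho_*}\omega(\rho)\,\mathrm{d}\rho \le \frac{T}{\beta}.
\end{align*}
This yields the lower bound $T\ge \beta\big(\int_{\rho_0}^{\rho_*}(\rho(1-\rho))^{-1/2}\mathrm{d}\rho\big)^2$, which does not depend on $\omega$. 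The integral has the explicit value $2\arcsin\sqrt{\rho_*}-2\arcsin\sqrt{\rho_0}$, matching the GRPO surrogate $F$ from Section~\ref{sec:surrogate reward perspective}. The bound is finite as long as $\rho_*\le 1$, since the singularities of $(\rho(1-\rho))^{-1/2}$ at the endpoints are integrable.

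Equality in Cauchy--Schwarz holds iff $\sqrt{h/\omega}\propto\sqrt{\omega}$, i.e.\ $\omega\propto\sqrt h = 1/\sqrt{\rho(1-\rho)}$, and additionally the budget must be tight. Taking $\omega_{\text{opt}}(\rho)=\big(\rho(1-\rho)\big)^{-1/2}/\int_{\rho_0}^{\rho_*}(s(1-s))^{-1/2}\mathrm{d}s$ attains the bound, which proves the claim. Values of $\omega$ outside $[\rho_0,\rho_*]$ do not affect $T$, which is why the statement only specifies $\omega$ on that interval.

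The main obstacle is justifying the reduction rigorously rather than the optimization itself. One has to argue that any admissible $\omega$ with $T<\infty$ is positive almost everywhere on the interval, and that the time-change formula holds for merely measurable $\omega$; the latter can be handled by restricting to continuous (or piecewise continuous) $\omega$, or by defining $T$ directly through the integral. Uniqueness should be stated as a.e.\ uniqueness, which comes from the equality case of Cauchy--Schwarz. As a cross-check, a Lagrangian pointwise argument, minimizing $h/\omega+\lambda\omega$ at each $\rho$, gives the same $\omega\propto\sqrt h$.
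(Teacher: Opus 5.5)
Your proposal is correct and matches the paper's proof essentially step for step. The paper also rewrites the hitting time as $\beta\int_{\rho_0}^{\rho_*}\frac{\mathrm{d}\rho}{\rho(1-\rho)\,\omega(\rho)}$, applies Cauchy--Schwarz with $a(\rho)=1/(\rho(1-\rho))$, and reads off $\omega\propto\sqrt{a}$ with the budget saturated, giving the normalizing constant $1/\bigl(2(\arcsin\sqrt{\rho_*}-\arcsin\sqrt{\rho_0})\bigr)$.

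Your extra care about positivity and measurability of $\omega$ goes beyond the paper, which simply takes the integral formula as the definition of $T$. That is also the cleaner fix for your opening claim, since a zero of $\omega$ with integrable $1/\omega$ does not by itself force $T=\infty$.
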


Here, time $T$ is the continuous analogue to the number of non-zero gradient updates applied to the model. The proposition gives an optimality interpretation to GRPO's weighting as the weighting minimizing the optimization clock, i.e., the number of updates to reach a target success rate, when all weight functions are normalized to satisfy the same budget constraint. 

Figure~\ref{fig:dynamics_comparison_normalized} (left) visualizes this optimality result, showing the evolution of the success rate for Linear-R, Sqrt-R, RLOO, and GRPO with weights normalized to satisfy the budget constraint for $\rho_*=1$ (see Proposition~\ref{prop:ode_solutions_regular_time} for analytical expressions). As predicted, GRPO reaches the target $\rho_*=1$ first. However, the gain over other algorithms is modest, and asymmetric weights, while not optimal, perform comparably. This is consistent with our Post-SFT experimental findings where all methods achieved similar performance.


\paragraph{Effective time-optimality of Sqrt-R in low-success regimes.}

We now characterize the weight function that minimizes the effective time to reach a target success rate in effective time. {As before, for fair comparison across different weights, we normalize by imposing the same budget constraint~\eqref{eq:budget} on the total accumulated weight.}

\begin{proposition}\label{prop:single_effective_time}
    Suppose the success rate $\rho_t$ of a fixed prompt evolves as in Equation~\eqref{eq:ODE} with initialization $\rho_0 \in (0,1)$ and target $\rho_* > \rho_0$. For $g(\rho)$ given in Equation \eqref{eq:effective_time}, define  effective time-to-target as
    \[
    T_g(\rho_0, \rho_*; \omega) := \int_{\rho_0}^{T} g(\rho_t)\,\mathrm{d}t, \qquad \text{where } \rho_T = \rho_*.
    \]
    Under the budget constraint~\eqref{eq:budget} with $B = 1$, the non-negative weight function minimizing $T_g$ over $[\rho_0, \rho_*]$ is
    \begin{align}
    \label{eq:LR-style-weight}
    \omega_{\mathrm{opt}}(\rho) \propto \frac{1}{\rho\sqrt{1-\rho}}.
    \end{align}
\end{proposition}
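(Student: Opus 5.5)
The plan is to convert the effective time into an integral over $\rho$ and then reduce the problem to a Cauchy--Schwarz (equivalently, Lagrange-multiplier) argument, mirroring what one would do for Proposition~\ref{prop:single}.

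\textbf{Step 1: rewrite the objective.} By the ODE~\eqref{eq:ODE}, $\rho_t$ is strictly increasing whenever $\omega>0$ on $[\rho_0,\rho_*]$, so we can change variables from $t$ to $\rho$ via $\mathrm{d}t = \beta\,\mathrm{d}\rho / \big(\rho(1-\rho)\omega(\rho)\big)$. With $g(\rho)=1/\rho$ from~\eqref{eq:effective_time}, this gives
\begin{align*}
T_g(\rho_0,\rho_*;\omega) = \beta\int_{\rho_0}^{\rho_*} \frac{\mathrm{d}\rho}{\rho^2(1-\rho)\,\omega(\rho)} .
\end{align*}
This is equivalent to integrating the effective-time ODE~\eqref{eq:effective_time_ode}. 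If $\omega$ vanishes on a set of positive measure, the target is never reached and $T_g=\infty$. We may therefore restrict to $\omega>0$ almost everywhere, and write $h(\rho):=\rho^2(1-\rho)$.

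\textbf{Step 2: optimize.} Minimize $\int h^{-1}\omega^{-1}$ subject to $\int\omega\le 1$ and $\omega\ge0$. By Cauchy--Schwarz,
\begin{align*}
\Big(\int_{\rho_0}^{\rho_*} h(\rho)^{-1/2}\,\mathrm{d}\rho\Big)^2 \le \int_{\rho_0}^{\rho_*} \frac{\mathrm{d}\rho}{h(\rho)\omega(\rho)} \cdot \int_{\rho_0}^{\rho_*}\omega(\rho)\,\mathrm{d}\rho \le \int_{\rho_0}^{\rho_*} \frac{\mathrm{d}\rho}{h(\rho)\omega(\rho)} .
\end{align*}
This yields a lower bound on $T_g/\beta$ that does not depend on $\omega$. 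Equality holds if and only if the budget is saturated and $\sqrt{\omega}$ is proportional to $1/\sqrt{h\omega}$, that is, $\omega\propto h^{-1/2} = 1/(\rho\sqrt{1-\rho})$. The normalizing constant is $1/\int_{\rho_0}^{\rho_*}h^{-1/2}\,\mathrm{d}\rho$.

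Alternatively, one could take a pointwise Lagrangian: the stationarity condition $-1/(h\omega^2)+\lambda=0$ gives the same form $\omega\propto h^{-1/2}$, and strict convexity of $\omega\mapsto1/\omega$ gives uniqueness.

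\textbf{Step 3: check finiteness.} We need $\int_{\rho_0}^{\rho_*}\mathrm{d}\rho/(\rho\sqrt{1-\rho})<\infty$, so that the normalization makes sense. This holds because $\rho_0>0$, and the singularity at $\rho_*=1$ is of type $(1-\rho)^{-1/2}$, which is integrable. The antiderivative is $\ln\frac{1-\sqrt{1-\rho}}{1+\sqrt{1-\rho}}$, which matches the constant $\beta_S$ used in Figure~\ref{fig:dynamics_comparison_normalized}.

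\textbf{Main obstacle.} The only delicate point is rigor in Step 1: justifying the change of variables and the reduction to $\omega>0$ for merely measurable $\omega$, and handling $\rho_*=1$. That case involves an improper integral and the question of whether the target is reached in finite time. I would handle this by noting that the ODE in $\rho$-time defines $t(\rho)$ directly as the integral above, so the variable change holds by construction. Cauchy--Schwarz then applies to any nonnegative measurable $\omega$, with both sides allowed to be infinite.
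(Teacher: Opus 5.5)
Your proposal is correct and follows essentially the same route as the paper. The paper also changes variables via $\mathrm{d}t=\mathrm{d}\rho/(\mathrm{d}\rho_t/\mathrm{d}t)$ to get $T_g=\beta\int_{\rho_0}^{\rho_*}\mathrm{d}\rho/(\rho^2(1-\rho)\,\omega(\rho))$, then applies Cauchy--Schwarz against the budget, with equality iff $\omega\propto(\rho^2(1-\rho))^{-1/2}$ and the budget is tight. Your additional remarks (integrability at $\rho_*=1$, the case where $\omega$ vanishes, and the Lagrangian cross-check) are sound refinements that the paper omits, but they do not change the argument.
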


    Propositions~\ref{prop:single} and~\ref{prop:single_effective_time} optimize the same population dynamics~\eqref{eq:ODE} but under different notions of time. The optimization time $t$ yields the GRPO weighting 
    whereas the rollout-cost-adjusted effective time~\eqref{eq:effective_time}, which more accurately reflects computational costs when $\rho$ is small, yields the weight~\eqref{eq:LR-style-weight}.
    This is exactly the weight of our Sqrt-R. Also,
    it behaves as Linear-R's $1/\rho$ weight in the low-success regime.

Figure~\ref{fig:dynamics_comparison_normalized} (right) visualizes this optimality under budget normalization, showing that Sqrt-R reaches $\rho_*=1$ significantly faster than all other algorithms in effective time, with Linear-R also substantially outperforming GRPO and RLOO (see Proposition~\ref{prop:closed_form_tau} in the appendix for analytical expressions). 
This distinction is practically relevant: from-scratch RL often operates for a substantial period in a low-success regime where successful rollouts are scarce, making effective time the more appropriate metric. In these settings, both our theory and experiments yield that Sqrt-R and Linear-R outperform the strong GRPO baseline.

\section{Discussion and limitation}
In this paper, we have demonstrated that asymmetric prompt weightings can be beneficial, in particular for from-scratch RLVR.

While our analysis focuses on binary rewards, the proposed weightings and advantage estimators extend naturally to rewards in $[0,1]$ by taking $\hat\rho$ to be the empirical mean reward. Several directions merit further study: asymmetric weighting beyond binary rewards; applications to training agentic systems; and scaling to larger models and datasets. 
Our theory also suggests allocating more rollouts when $\rho$ is close to $0$. While interesting, we were not able to pursue these directions due to compute limitations (each reported run in this paper already costs several hundreds of H100 hours).

\subsection*{Acknowledgements}
RH gratefully acknowledges the computing time made available for the RL experiments in this paper on the high-performance computer at the NHR Center of TU Dresden. This center is jointly supported by the Federal Ministry of Research, Technology and Space of Germany and the state governments participating in the NHR (www.nhr-verein.de/unsere-partner). CT gratefully acknowledges support from the Natural Sciences and Engineering Research Council (NSERC) of Canada.

\printbibliography

@misc{ahmadian_BackBasicsRevisiting_2024,
  title = {Back to {{Basics}}: {{Revisiting REINFORCE Style Optimization}} for {{Learning}} from {{Human Feedback}} in {{LLMs}}},
  author = {Ahmadian, Arash and Cremer, Chris and Gall{\'e}, Matthias and Fadaee, Marzieh and Kreutzer, Julia and Pietquin, Olivier and {\"U}st{\"u}n, Ahmet and Hooker, Sara},
  year = 2024,
  number = {arXiv:2402.14740},
  eprint = {2402.14740},
  publisher = {arXiv},
  archiveprefix = {arXiv}
}

@misc{cao2025skyrl,
  title = {{{SkyRL-v0}}: {{Train}} Real-World Long-Horizon Agents via Reinforcement Learning},
  author = {Cao, Shiyi and Hegde, Sumanth and Li, Dacheng and Griggs, Tyler and Liu, Shu and Tang, Eric and Pan, Jiayi and Wang, Xingyao and Malik, Akshay and Neubig, Graham and Hakhamaneshi, Kourosh and Liaw, Richard and Moritz, Philipp and Zaharia, Matei and Gonzalez, Joseph E. and Stoica, Ion},
  year = 2025
}

@misc{chen_PasskTrainingAdaptively_2025,
  title = {Pass@k {{Training}} for {{Adaptively Balancing Exploration}} and {{Exploitation}} of {{Large Reasoning Models}}},
  author = {Chen, Zhipeng and Qin, Xiaobo and Wu, Youbin and Ling, Yue and Ye, Qinghao and Zhao, Wayne Xin and Shi, Guang},
  year = 2025,
  number = {arXiv:2508.10751},
  eprint = {2508.10751},
  archiveprefix = {arXiv}
}

@misc{cobbe_TrainingVerifiersSolve_2021,
  title = {Training {{Verifiers}} to {{Solve Math Word Problems}}},
  author = {Cobbe, Karl and Kosaraju, Vineet and Bavarian, Mohammad and Chen, Mark and Jun, Heewoo and Kaiser, Lukasz and Plappert, Matthias and Tworek, Jerry and Hilton, Jacob and Nakano, Reiichiro and Hesse, Christopher and Schulman, John},
  year = 2021,
  number = {arXiv:2110.14168},
  eprint = {2110.14168},
  publisher = {arXiv},
  archiveprefix = {arXiv}
}

@misc{davis_WhatObjectiveReasoning_2025,
  title = {What Is the Objective of Reasoning with Reinforcement Learning?},
  author = {Davis, Damek and Recht, Benjamin},
  year = 2025,
  number = {arXiv:2510.13651},
  eprint = {2510.13651},
  publisher = {arXiv},
  archiveprefix = {arXiv}
}

@misc{deepseek-ai_DeepSeekR1IncentivizingReasoning_2025,
  title = {{{DeepSeek-R1}}: {{Incentivizing Reasoning Capability}} in {{LLMs}} via {{Reinforcement Learning}}},
  author = {{DeepSeek-AI} and Guo, Daya and Yang, Dejian and Zhang, Haowei and Song, Junxiao and Zhang, Ruoyu and Xu, Runxin and Zhu, Qihao and Ma, Shirong and Wang, Peiyi and Bi, Xiao and Zhang, Xiaokang and Yu, Xingkai and Wu, Yu and Wu, Z. F. and Gou, Zhibin and Shao, Zhihong and Li, Zhuoshu and Gao, Ziyi and Liu, Aixin and Xue, Bing and Wang, Bingxuan and Wu, Bochao and Feng, Bei and Lu, Chengda and Zhao, Chenggang and Deng, Chengqi and Zhang, Chenyu and Ruan, Chong and Dai, Damai and Chen, Deli and Ji, Dongjie and Li, Erhang and Lin, Fangyun and Dai, Fucong and Luo, Fuli and Hao, Guangbo and Chen, Guanting and Li, Guowei and Zhang, H. and Bao, Han and Xu, Hanwei and Wang, Haocheng and Ding, Honghui and Xin, Huajian and Gao, Huazuo and Qu, Hui and Li, Hui and Guo, Jianzhong and Li, Jiashi and Wang, Jiawei and Chen, Jingchang and Yuan, Jingyang and Qiu, Junjie and Li, Junlong and Cai, J. L. and Ni, Jiaqi and Liang, Jian and Chen, Jin and Dong, Kai and Hu, Kai and Gao, Kaige and Guan, Kang and Huang, Kexin and Yu, Kuai and Wang, Lean and Zhang, Lecong and Zhao, Liang and Wang, Litong and Zhang, Liyue and Xu, Lei and Xia, Leyi and Zhang, Mingchuan and Zhang, Minghua and Tang, Minghui and Li, Meng and Wang, Miaojun and Li, Mingming and Tian, Ning and Huang, Panpan and Zhang, Peng and Wang, Qiancheng and Chen, Qinyu and Du, Qiushi and Ge, Ruiqi and Zhang, Ruisong and Pan, Ruizhe and Wang, Runji and Chen, R. J. and Jin, R. L. and Chen, Ruyi and Lu, Shanghao and Zhou, Shangyan and Chen, Shanhuang and Ye, Shengfeng and Wang, Shiyu and Yu, Shuiping and Zhou, Shunfeng and Pan, Shuting and Li, S. S. and Zhou, Shuang and Wu, Shaoqing and Ye, Shengfeng and Yun, Tao and Pei, Tian and Sun, Tianyu and Wang, T. and Zeng, Wangding and Zhao, Wanjia and Liu, Wen and Liang, Wenfeng and Gao, Wenjun and Yu, Wenqin and Zhang, Wentao and Xiao, W. L. and An, Wei and Liu, Xiaodong and Wang, Xiaohan and Chen, Xiaokang and Nie, Xiaotao and Cheng, Xin and Liu, Xin and Xie, Xin and Liu, Xingchao and Yang, Xinyu and Li, Xinyuan and Su, Xuecheng and Lin, Xuheng and Li, X. Q. and Jin, Xiangyue and Shen, Xiaojin and Chen, Xiaosha and Sun, Xiaowen and Wang, Xiaoxiang and Song, Xinnan and Zhou, Xinyi and Wang, Xianzu and Shan, Xinxia and Li, Y. K. and Wang, Y. Q. and Wei, Y. X. and Zhang, Yang and Xu, Yanhong and Li, Yao and Zhao, Yao and Sun, Yaofeng and Wang, Yaohui and Yu, Yi and Zhang, Yichao and Shi, Yifan and Xiong, Yiliang and He, Ying and Piao, Yishi and Wang, Yisong and Tan, Yixuan and Ma, Yiyang and Liu, Yiyuan and Guo, Yongqiang and Ou, Yuan and Wang, Yuduan and Gong, Yue and Zou, Yuheng and He, Yujia and Xiong, Yunfan and Luo, Yuxiang and You, Yuxiang and Liu, Yuxuan and Zhou, Yuyang and Zhu, Y. X. and Xu, Yanhong and Huang, Yanping and Li, Yaohui and Zheng, Yi and Zhu, Yuchen and Ma, Yunxian and Tang, Ying and Zha, Yukun and Yan, Yuting and Ren, Z. Z. and Ren, Zehui and Sha, Zhangli and Fu, Zhe and Xu, Zhean and Xie, Zhenda and Zhang, Zhengyan and Hao, Zhewen and Ma, Zhicheng and Yan, Zhigang and Wu, Zhiyu and Gu, Zihui and Zhu, Zijia and Liu, Zijun and Li, Zilin and Xie, Ziwei and Song, Ziyang and Pan, Zizheng and Huang, Zhen and Xu, Zhipeng and Zhang, Zhongyu and Zhang, Zhen},
  year = 2025,
  number = {arXiv:2501.12948},
  eprint = {2501.12948},
  publisher = {arXiv},
  archiveprefix = {arXiv}
}

@misc{feng_DontWasteMistakes_2025,
  title = {Don't {{Waste Mistakes}}: {{Leveraging Negative RL-Groups}} via {{Confidence Reweighting}}},
  author = {Feng, Yunzhen and Jain, Parag and Hartshorn, Anthony and Duan, Yaqi and Kempe, Julia},
  year = 2025,
  number = {arXiv:2510.08696},
  eprint = {2510.08696},
  publisher = {arXiv},
  archiveprefix = {arXiv}
}

@misc{he_JustRLScaling15B_2025,
  title = {{{JustRL}}: {{Scaling}} a 1.{{5B LLM}} with a {{Simple RL Recipe}}},
  author = {He, Bingxiang and Qu, Zekai and Liu, Zeyuan and Chen, Yinghao and Zuo, Yuxin and Qian, Cheng and Zhang, Kaiyan and Chen, Weize and Xiao, Chaojun and Cui, Ganqu and Ding, Ning and Liu, Zhiyuan},
  year = 2025,
  number = {arXiv:2512.16649},
  eprint = {2512.16649},
  publisher = {arXiv},
  archiveprefix = {arXiv}
}

@inproceedings{hendrycks_MeasuringMathematicalProblem_2021,
  title = {Measuring {{Mathematical Problem Solving With}} the {{MATH Dataset}}},
  booktitle = {Neural {{Information Processing Systems}} ({{NeurIPS}})},
  author = {Hendrycks, Dan and Burns, Collin and Kadavath, Saurav and Arora, Akul and Basart, Steven and Tang, Eric and Song, Dawn and Steinhardt, Jacob},
  year = 2021,
  eprint = {2103.03874},
  publisher = {arXiv},
  archiveprefix = {arXiv}
}

@misc{hong_APOEnhancingReasoning_2025,
  title = {{{APO}}: {{Enhancing Reasoning Ability}} of {{MLLMs}} via {{Asymmetric Policy Optimization}}},
  author = {Hong, Minjie and Guo, Zirun and Xia, Yan and Wang, Zehan and Zhang, Ziang and Jin, Tao and Zhao, Zhou},
  year = 2025,
  number = {arXiv:2506.21655},
  eprint = {2506.21655},
  publisher = {arXiv},
  archiveprefix = {arXiv}
}

@inproceedings{kool_EstimatingGradientsDiscrete_2020,
  title = {Estimating {{Gradients}} for {{Discrete Random Variables}} by {{Sampling}} without {{Replacement}}},
  booktitle = {International {{Conference}} on {{Learning Representations}} ({{ICLR}})},
  author = {Kool, Wouter and van Hoof, Herke and Welling, Max},
  year = 2020,
  eprint = {2002.06043},
  archiveprefix = {arXiv}
}

@misc{le_NoPromptLeft_2025,
  title = {No {{Prompt Left Behind}}: {{Exploiting Zero-Variance Prompts}} in {{LLM Reinforcement Learning}} via {{Entropy-Guided Advantage Shaping}}},
  author = {Le, Thanh-Long V. and Jeon, Myeongho and Vu, Kim and Lai, Viet and Yang, Eunho},
  year = 2025,
  number = {arXiv:2509.21880},
  eprint = {2509.21880},
  publisher = {arXiv},
  archiveprefix = {arXiv}
}

@misc{liu_UnderstandingR1ZeroLikeTraining_2025c,
  title = {Understanding {{R1-Zero-Like Training}}: {{A Critical Perspective}}},
  author = {Liu, Zichen and Chen, Changyu and Li, Wenjun and Qi, Penghui and Pang, Tianyu and Du, Chao and Lee, Wee Sun and Lin, Min},
  year = 2025,
  number = {arXiv:2503.20783},
  eprint = {2503.20783},
  publisher = {arXiv},
  archiveprefix = {arXiv}
}

@inproceedings{mahdavi_AccuracyPolicyGradient_2025,
  title = {Beyond {{Accuracy}}: {{A Policy Gradient Reweighting Approach}} for {{Pass}}@{{K Maximization}} in {{LLMs}}},
  booktitle = {2nd {{AI}} for {{Math Workshop}} @ {{ICML}} 2025},
  author = {Mahdavi, Sadegh and Li, Muchen and Liu, Kaiwen and Liao, Renjie and Thrampoulidis, Christos},
  year = 2025,
  langid = {english}
}

@misc{mroueh_ReinforcementLearningVerifiable_2025,
  title = {Reinforcement {{Learning}} with {{Verifiable Rewards}}: {{GRPO}}'s {{Effective Loss}}, {{Dynamics}}, and {{Success Amplification}}},
  author = {Mroueh, Youssef},
  year = 2025,
  number = {arXiv:2503.06639},
  eprint = {2503.06639},
  publisher = {arXiv},
  archiveprefix = {arXiv}
}

@misc{openai_OpenAIO1System_2024,
  title = {{{OpenAI}} O1 {{System Card}}},
  author = {OpenAI and Jaech, Aaron and Kalai, Adam and Lerer, Adam and Richardson, Adam and {El-Kishky}, Ahmed and Low, Aiden and Helyar, Alec and Madry, Aleksander and Beutel, Alex and Carney, Alex and Iftimie, Alex and Karpenko, Alex and Passos, Alex Tachard and Neitz, Alexander and Prokofiev, Alexander and Wei, Alexander and Tam, Allison and Bennett, Ally and Kumar, Ananya and Saraiva, Andre and Vallone, Andrea and Duberstein, Andrew and Kondrich, Andrew and Mishchenko, Andrey and Applebaum, Andy and Jiang, Angela and Nair, Ashvin and Zoph, Barret and Ghorbani, Behrooz and Rossen, Ben and Sokolowsky, Benjamin and Barak, Boaz and McGrew, Bob and Minaiev, Borys and Hao, Botao and Baker, Bowen and Houghton, Brandon and McKinzie, Brandon and Eastman, Brydon and Lugaresi, Camillo and Bassin, Cary and Hudson, Cary and Li, Chak Ming and de Bourcy, Charles and Voss, Chelsea and Shen, Chen and Zhang, Chong and Koch, Chris and Orsinger, Chris and Hesse, Christopher and Fischer, Claudia and Chan, Clive and Roberts, Dan and Kappler, Daniel and Levy, Daniel and Selsam, Daniel and Dohan, David and Farhi, David and Mely, David and Robinson, David and Tsipras, Dimitris and Li, Doug and Oprica, Dragos and Freeman, Eben and Zhang, Eddie and Wong, Edmund and Proehl, Elizabeth and Cheung, Enoch and Mitchell, Eric and Wallace, Eric and Ritter, Erik and Mays, Evan and Wang, Fan and Such, Felipe Petroski and Raso, Filippo and Leoni, Florencia and Tsimpourlas, Foivos and Song, Francis and von Lohmann, Fred and Sulit, Freddie and Salmon, Geoff and Parascandolo, Giambattista and Chabot, Gildas and Zhao, Grace and Brockman, Greg and Leclerc, Guillaume and Salman, Hadi and Bao, Haiming and Sheng, Hao and Andrin, Hart and Bagherinezhad, Hessam and Ren, Hongyu and Lightman, Hunter and Chung, Hyung Won and Kivlichan, Ian and O'Connell, Ian and Osband, Ian and Gilaberte, Ignasi Clavera and Akkaya, Ilge and Kostrikov, Ilya and Sutskever, Ilya and Kofman, Irina and Pachocki, Jakub and Lennon, James and Wei, Jason and Harb, Jean and Twore, Jerry and Feng, Jiacheng and Yu, Jiahui and Weng, Jiayi and Tang, Jie and Yu, Jieqi and Candela, Joaquin Qui{\~n}onero and Palermo, Joe and Parish, Joel and Heidecke, Johannes and Hallman, John and Rizzo, John and Gordon, Jonathan and Uesato, Jonathan and Ward, Jonathan and Huizinga, Joost and Wang, Julie and Chen, Kai and Xiao, Kai and Singhal, Karan and Nguyen, Karina and Cobbe, Karl and Shi, Katy and Wood, Kayla and Rimbach, Kendra and {Gu-Lemberg}, Keren and Liu, Kevin and Lu, Kevin and Stone, Kevin and Yu, Kevin and Ahmad, Lama and Yang, Lauren and Liu, Leo and Maksin, Leon and Ho, Leyton and Fedus, Liam and Weng, Lilian and Li, Linden and McCallum, Lindsay and Held, Lindsey and Kuhn, Lorenz and Kondraciuk, Lukas and Kaiser, Lukasz and Metz, Luke and Boyd, Madelaine and Trebacz, Maja and Joglekar, Manas and Chen, Mark and Tintor, Marko and Meyer, Mason and Jones, Matt and Kaufer, Matt and Schwarzer, Max and Shah, Meghan and Yatbaz, Mehmet and Guan, Melody Y. and Xu, Mengyuan and Yan, Mengyuan and Glaese, Mia and Chen, Mianna and Lampe, Michael and Malek, Michael and Wang, Michele and Fradin, Michelle and McClay, Mike and Pavlov, Mikhail and Wang, Miles and Wang, Mingxuan and Murati, Mira and Bavarian, Mo and Rohaninejad, Mostafa and McAleese, Nat and Chowdhury, Neil and Chowdhury, Neil and Ryder, Nick and Tezak, Nikolas and Brown, Noam and Nachum, Ofir and Boiko, Oleg and Murk, Oleg and Watkins, Olivia and Chao, Patrick and Ashbourne, Paul and Izmailov, Pavel and Zhokhov, Peter and Dias, Rachel and Arora, Rahul and Lin, Randall and Lopes, Rapha Gontijo and Gaon, Raz and Miyara, Reah and Leike, Reimar and Hwang, Renny and Garg, Rhythm and Brown, Robin and James, Roshan and Shu, Rui and Cheu, Ryan and Greene, Ryan and Jain, Saachi and Altman, Sam and Toizer, Sam and Toyer, Sam and Miserendino, Samuel and Agarwal, Sandhini and Hernandez, Santiago and Baker, Sasha and McKinney, Scott and Yan, Scottie and Zhao, Shengjia and Hu, Shengli and Santurkar, Shibani and Chaudhuri, Shraman Ray and Zhang, Shuyuan and Fu, Siyuan and Papay, Spencer and Lin, Steph and Balaji, Suchir and Sanjeev, Suvansh and Sidor, Szymon and Broda, Tal and Clark, Aidan and Wang, Tao and Gordon, Taylor and Sanders, Ted and Patwardhan, Tejal and Sottiaux, Thibault and Degry, Thomas and Dimson, Thomas and Zheng, Tianhao and Garipov, Timur and Stasi, Tom and Bansal, Trapit and Creech, Trevor and Peterson, Troy and Eloundou, Tyna and Qi, Valerie and Kosaraju, Vineet and Monaco, Vinnie and Pong, Vitchyr and Fomenko, Vlad and Zheng, Weiyi and Zhou, Wenda and McCabe, Wes and Zaremba, Wojciech and Dubois, Yann and Lu, Yinghai and Chen, Yining and Cha, Young and Bai, Yu and He, Yuchen and Zhang, Yuchen and Wang, Yunyun and Shao, Zheng and Li, Zhuohan},
  year = 2024,
  number = {arXiv:2412.16720},
  eprint = {2412.16720},
  publisher = {arXiv},
  archiveprefix = {arXiv}
}

@misc{pan_TinyZero_2025,
  title = {{{TinyZero}}},
  author = {Pan, Jiayi and Zhang, Junjie and Wang, Xiangyao and Yuan, Lifan and Peng, Hao and Suhr, Alane},
  year = 2025
}

@misc{rafailov_DirectPreferenceOptimization_2023,
  title = {Direct {{Preference Optimization}}: {{Your Language Model}} Is {{Secretly}} a {{Reward Model}}},
  author = {Rafailov, Rafael and Sharma, Archit and Mitchell, Eric and Ermon, Stefano and Manning, Christopher D. and Finn, Chelsea},
  year = 2023,
  number = {arXiv:2305.18290},
  eprint = {2305.18290},
  publisher = {arXiv},
  archiveprefix = {arXiv}
}

@misc{roux_TaperedOffPolicyREINFORCE_2025,
  title = {Tapered {{Off-Policy REINFORCE}}: {{Stable}} and Efficient Reinforcement Learning for {{LLMs}}},
  author = {Roux, Nicolas Le and Bellemare, Marc G. and Lebensold, Jonathan and Bergeron, Arnaud and Greaves, Joshua and Fr{\'e}chette, Alex and Pelletier, Carolyne and {Thibodeau-Laufer}, Eric and Toth, S{\'a}ndor and Work, Sam},
  year = 2025,
  number = {arXiv:2503.14286},
  eprint = {2503.14286},
  publisher = {arXiv},
  archiveprefix = {arXiv}
}

@misc{schulman_LoRARegret_2025,
  title = {{{LoRA Without Regret}}},
  author = {Schulman, John and {al}, et},
  year = 2025,
  journal = {Thinking Machines Lab Blog Post},
  howpublished = {https://thinkingmachines.ai/blog/lora/},
  langid = {english}
}

@misc{schulman_ProximalPolicyOptimization_2017,
  title = {Proximal {{Policy Optimization Algorithms}}},
  author = {Schulman, John and Wolski, Filip and Dhariwal, Prafulla and Radford, Alec and Klimov, Oleg},
  year = 2017,
  number = {arXiv:1707.06347},
  eprint = {1707.06347},
  publisher = {arXiv},
  archiveprefix = {arXiv}
}

@misc{shao_DeepSeekMathPushingLimits_2024,
  title = {{{DeepSeekMath}}: {{Pushing}} the {{Limits}} of {{Mathematical Reasoning}} in {{Open Language Models}}},
  author = {Shao, Zhihong and Wang, Peiyi and Zhu, Qihao and Xu, Runxin and Song, Junxiao and Bi, Xiao and Zhang, Haowei and Zhang, Mingchuan and Li, Y. K. and Wu, Y. and Guo, Daya},
  year = 2024,
  number = {arXiv:2402.03300},
  eprint = {2402.03300},
  publisher = {arXiv},
  archiveprefix = {arXiv},
  langid = {english}
}

@misc{shao_SpuriousRewardsRethinking_2025,
  title = {Spurious {{Rewards}}: {{Rethinking Training Signals}} in {{RLVR}}},
  author = {Shao, Rulin and Li, Shuyue Stella and Xin, Rui and Geng, Scott and Wang, Yiping and Oh, Sewoong and Du, Simon Shaolei and Lambert, Nathan and Min, Sewon and Krishna, Ranjay and Tsvetkov, Yulia and Hajishirzi, Hannaneh and Koh, Pang Wei and Zettlemoyer, Luke},
  year = 2025,
  number = {arXiv:2506.10947},
  eprint = {2506.10947},
  publisher = {arXiv},
  archiveprefix = {arXiv}
}

@misc{team_KimiK15Scaling_2025,
  title = {Kimi K1.5: {{Scaling Reinforcement Learning}} with {{LLMs}}},
  author = {Team, Kimi and Du, Angang and Gao, Bofei and Xing, Bowei and Jiang, Changjiu and Chen, Cheng and Li, Cheng and Xiao, Chenjun and Du, Chenzhuang and Liao, Chonghua and Tang, Chuning and Wang, Congcong and Zhang, Dehao and Yuan, Enming and Lu, Enzhe and Tang, Fengxiang and Sung, Flood and Wei, Guangda and Lai, Guokun and Guo, Haiqing and Zhu, Han and Ding, Hao and Hu, Hao and Yang, Hao and Zhang, Hao and Yao, Haotian and Zhao, Haotian and Lu, Haoyu and Li, Haoze and Yu, Haozhen and Gao, Hongcheng and Zheng, Huabin and Yuan, Huan and Chen, Jia and Guo, Jianhang and Su, Jianlin and Wang, Jianzhou and Zhao, Jie and Zhang, Jin and Liu, Jingyuan and Yan, Junjie and Wu, Junyan and Shi, Lidong and Ye, Ling and Yu, Longhui and Dong, Mengnan and Zhang, Neo and Ma, Ningchen and Pan, Qiwei and Gong, Qucheng and Liu, Shaowei and Ma, Shengling and Wei, Shupeng and Cao, Sihan and Huang, Siying and Jiang, Tao and Gao, Weihao and Xiong, Weimin and He, Weiran and Huang, Weixiao and Xu, Weixin and Wu, Wenhao and He, Wenyang and Wei, Xianghui and Jia, Xianqing and Wu, Xingzhe and Xu, Xinran and Zu, Xinxing and Zhou, Xinyu and Pan, Xuehai and Charles, Y. and Li, Yang and Hu, Yangyang and Liu, Yangyang and Chen, Yanru and Wang, Yejie and Liu, Yibo and Qin, Yidao and Liu, Yifeng and Yang, Ying and Bao, Yiping and Du, Yulun and Wu, Yuxin and Wang, Yuzhi and Zhou, Zaida and Wang, Zhaoji and Li, Zhaowei and Zhu, Zhen and Zhang, Zheng and Wang, Zhexu and Yang, Zhilin and Huang, Zhiqi and Huang, Zihao and Xu, Ziyao and Yang, Zonghan and Lin, Zongyu},
  year = 2025,
  number = {arXiv:2501.12599},
  eprint = {2501.12599},
  publisher = {arXiv},
  archiveprefix = {arXiv}
}

@misc{team_KimiK2Open_2025,
  title = {Kimi {{K2}}: {{Open Agentic Intelligence}}},
  author = {Team, Kimi and Bai, Yifan and Bao, Yiping and Chen, Guanduo and Chen, Jiahao and Chen, Ningxin and Chen, Ruijue and Chen, Yanru and Chen, Yuankun and Chen, Yutian and Chen, Zhuofu and Cui, Jialei and Ding, Hao and Dong, Mengnan and Du, Angang and Du, Chenzhuang and Du, Dikang and Du, Yulun and Fan, Yu and Feng, Yichen and Fu, Kelin and Gao, Bofei and Gao, Hongcheng and Gao, Peizhong and Gao, Tong and Gu, Xinran and Guan, Longyu and Guo, Haiqing and Guo, Jianhang and Hu, Hao and Hao, Xiaoru and He, Tianhong and He, Weiran and He, Wenyang and Hong, Chao and Hu, Yangyang and Hu, Zhenxing and Huang, Weixiao and Huang, Zhiqi and Huang, Zihao and Jiang, Tao and Jiang, Zhejun and Jin, Xinyi and Kang, Yongsheng and Lai, Guokun and Li, Cheng and Li, Fang and Li, Haoyang and Li, Ming and Li, Wentao and Li, Yanhao and Li, Yiwei and Li, Zhaowei and Li, Zheming and Lin, Hongzhan and Lin, Xiaohan and Lin, Zongyu and Liu, Chengyin and Liu, Chenyu and Liu, Hongzhang and Liu, Jingyuan and Liu, Junqi and Liu, Liang and Liu, Shaowei and Liu, T. Y. and Liu, Tianwei and Liu, Weizhou and Liu, Yangyang and Liu, Yibo and Liu, Yiping and Liu, Yue and Liu, Zhengying and Lu, Enzhe and Lu, Lijun and Ma, Shengling and Ma, Xinyu and Ma, Yingwei and Mao, Shaoguang and Mei, Jie and Men, Xin and Miao, Yibo and Pan, Siyuan and Peng, Yebo and Qin, Ruoyu and Qu, Bowen and Shang, Zeyu and Shi, Lidong and Shi, Shengyuan and Song, Feifan and Su, Jianlin and Su, Zhengyuan and Sun, Xinjie and Sung, Flood and Tang, Heyi and Tao, Jiawen and Teng, Qifeng and Wang, Chensi and Wang, Dinglu and Wang, Feng and Wang, Haiming and Wang, Jianzhou and Wang, Jiaxing and Wang, Jinhong and Wang, Shengjie and Wang, Shuyi and Wang, Yao and Wang, Yejie and Wang, Yiqin and Wang, Yuxin and Wang, Yuzhi and Wang, Zhaoji and Wang, Zhengtao and Wang, Zhexu and Wei, Chu and Wei, Qianqian and Wu, Wenhao and Wu, Xingzhe and Wu, Yuxin and Xiao, Chenjun and Xie, Xiaotong and Xiong, Weimin and Xu, Boyu and Xu, Jing and Xu, Jinjing and Xu, L. H. and Xu, Lin and Xu, Suting and Xu, Weixin and Xu, Xinran and Xu, Yangchuan and Xu, Ziyao and Yan, Junjie and Yan, Yuzi and Yang, Xiaofei and Yang, Ying and Yang, Zhen and Yang, Zhilin and Yang, Zonghan and Yao, Haotian and Yao, Xingcheng and Ye, Wenjie and Ye, Zhuorui and Yin, Bohong and Yu, Longhui and Yuan, Enming and Yuan, Hongbang and Yuan, Mengjie and Zhan, Haobing and Zhang, Dehao and Zhang, Hao and Zhang, Wanlu and Zhang, Xiaobin and Zhang, Yangkun and Zhang, Yizhi and Zhang, Yongting and Zhang, Yu and Zhang, Yutao and Zhang, Yutong and Zhang, Zheng and Zhao, Haotian and Zhao, Yikai and Zheng, Huabin and Zheng, Shaojie and Zhou, Jianren and Zhou, Xinyu and Zhou, Zaida and Zhu, Zhen and Zhuang, Weiyu and Zu, Xinxing},
  year = 2025,
  number = {arXiv:2507.20534},
  eprint = {2507.20534},
  publisher = {arXiv},
  archiveprefix = {arXiv}
}

@misc{thrampoulidis_AdvantageShapingSurrogate_2025,
  title = {Advantage {{Shaping}} as {{Surrogate Reward Maximization}}: {{Unifying Pass}}@{{K Policy Gradients}}},
  author = {Thrampoulidis, Christos and Mahdavi, Sadegh and Deng, Wenlong},
  year = 2025,
  number = {arXiv:2510.23049},
  eprint = {2510.23049},
  publisher = {arXiv},
  archiveprefix = {arXiv}
}

@misc{vojnovic_WhatAlignmentObjective_2025,
  title = {What Is the {{Alignment Objective}} of {{GRPO}}?},
  author = {Vojnovic, Milan and Yun, Se-Young},
  year = 2025,
  number = {arXiv:2502.18548},
  eprint = {2502.18548},
  publisher = {arXiv},
  archiveprefix = {arXiv}
}

@misc{wang_ASPOAsymmetricImportance_2025,
  title = {{{ASPO}}: {{Asymmetric Importance Sampling Policy Optimization}}},
  author = {Wang, Jiakang and Liu, Runze and Lin, Lei and Hu, Wenping and Li, Xiu and Zhang, Fuzheng and Zhou, Guorui and Gai, Kun},
  year = 2025,
  number = {arXiv:2510.06062},
  eprint = {2510.06062},
  publisher = {arXiv},
  archiveprefix = {arXiv}
}

@article{williams_SimpleStatisticalGradientfollowing_1992,
  title = {Simple Statistical Gradient-Following Algorithms for Connectionist Reinforcement Learning},
  author = {Williams, Ronald J.},
  year = 1992,
  journal = {Machine Learning},
  volume = {8},
  number = {3},
  pages = {229--256},
  langid = {english}
}

@misc{yu_DAPOOpenSourceLLM_2025,
  title = {{{DAPO}}: {{An Open-Source LLM Reinforcement Learning System}} at {{Scale}}},
  author = {Yu, Qiying and Zhang, Zheng and Zhu, Ruofei and Yuan, Yufeng and Zuo, Xiaochen and Yue, Yu and Dai, Weinan and Fan, Tiantian and Liu, Gaohong and Liu, Lingjun and Liu, Xin and Lin, Haibin and Lin, Zhiqi and Ma, Bole and Sheng, Guangming and Tong, Yuxuan and Zhang, Chi and Zhang, Mofan and Zhang, Wang and Zhu, Hang and Zhu, Jinhua and Chen, Jiaze and Chen, Jiangjie and Wang, Chengyi and Yu, Hongli and Song, Yuxuan and Wei, Xiangpeng and Zhou, Hao and Liu, Jingjing and Ma, Wei-Ying and Zhang, Ya-Qin and Yan, Lin and Qiao, Mu and Wu, Yonghui and Wang, Mingxuan},
  year = 2025,
  number = {arXiv:2503.14476},
  eprint = {2503.14476},
  publisher = {arXiv},
  archiveprefix = {arXiv}
}

@inproceedings{zhou_CoDaPOConfidenceDifficultyAdaptive_2025,
  title = {{{CoDaPO}}: {{Confidence}} and {{Difficulty-Adaptive Policy Optimization}} for {{Post-Training Language Models}}},
  booktitle = {2nd {{AI}} for {{Math Workshop}} @ {{ICML}} 2025},
  author = {Zhou, Zhanke and Lu, Xiangyu and Cao, Chentao and Miranda, Brando and Liu, Tongliang and Han, Bo and Koyejo, Sanmi},
  year = 2025,
  langid = {english}
}

\newpage

\appendix

\section{Supplementary figures and table}

Figure~\ref{fig:rhohatdistribution_math} and \ref{fig:rhohatdistributionDAPO} show the distribution of the fraction of correct responses during Linear-R runs on the MATH and DAPO-math experiments discussed in the main body. 

Table~\ref{tab:effective_weights_final} contains the effective weights for the considered algorithms.

\begin{figure*}
\begin{center}
\begin{tikzpicture}

\pgfplotstableread{
bin_start step_5 step_10 step_20 step_45 step_90 step_180
0.0000 13.67 10.16 13.67 10.16 10.55 12.11
0.0625 6.64 2.73 2.73 4.69 2.73 1.56
0.1250 3.52 4.69 3.52 1.56 2.34 0.78
0.1875 5.08 3.52 3.12 2.73 0.78 0.78
0.2500 3.52 2.73 1.95 2.73 2.34 2.34
0.3125 4.69 2.73 1.17 1.56 1.95 1.95
0.3750 3.91 4.30 2.34 1.95 1.56 0.39
0.4375 3.91 3.52 2.73 1.17 0.78 3.91
0.5000 4.30 4.69 2.73 5.08 3.12 2.34
0.5625 2.34 4.30 3.12 4.69 1.56 3.12
0.6250 7.42 5.47 3.12 1.95 1.56 2.73
0.6875 4.30 3.12 5.08 2.73 4.69 1.95
0.7500 4.69 4.69 2.73 3.52 2.73 1.56
0.8125 5.86 8.59 5.86 3.91 3.52 5.08
0.8750 7.42 7.03 10.94 4.30 8.59 6.25
0.9375 7.81 9.38 13.28 9.77 12.11 8.20
1.0000 10.94 18.36 21.88 37.50 39.06 44.92
}\datatable

\begin{groupplot}[
group style={
        group size=6 by 1,
        horizontal sep=0.7cm,
        ylabels at=edge left,
        yticklabels at=edge left,
        horizontal sep=0.1cm,
    },
    width=4cm,
    height=4cm,
    ymin=0,
    ymax=80,
    ylabel near ticks,
    xlabel near ticks,
    xtick={0,0.25,0.5,0.75,1},
    xticklabels={$0$,$\frac{4}{16}$,$\frac{8}{16}$,$\frac{12}{16}$,$1$},
    xmin=-0.05,
    xmax=1.05,
    tick label style={font=\small},
    label style={font=\small},
    title style={font=\small\bfseries},
]

\nextgroupplot[title={Step 5}, ylabel={Percentage (\%)}]
\addplot[thick, blue, mark=*, mark size=1pt, fill=blue, fill opacity=0.3] table[x=bin_start, y=step_5] {\datatable} \closedcycle;

\nextgroupplot[title={Step 10}]
\addplot[thick, blue, mark=*, mark size=1pt, fill=blue, fill opacity=0.3] table[x=bin_start, y=step_10] {\datatable} \closedcycle;

\nextgroupplot[title={Step 20}]
\addplot[thick, blue, mark=*, mark size=1pt, fill=blue, fill opacity=0.3] table[x=bin_start, y=step_20] {\datatable} \closedcycle;

\nextgroupplot[title={Step 45}]
\addplot[thick, blue, mark=*, mark size=1pt, fill=blue, fill opacity=0.3] table[x=bin_start, y=step_45] {\datatable} \closedcycle;

\nextgroupplot[title={Step 90}]
\addplot[thick, blue, mark=*, mark size=1pt, fill=blue, fill opacity=0.3] table[x=bin_start, y=step_90] {\datatable} \closedcycle;

\nextgroupplot[title={Step 180}]
\addplot[thick, blue, mark=*, mark size=1pt, fill=blue, fill opacity=0.3] table[x=bin_start, y=step_180] {\datatable} \closedcycle;
\end{groupplot}
\end{tikzpicture}
\end{center}
\vspace{-0.35cm}
\caption{
\label{fig:rhohatdistribution_math}
Distribution of the fraction of correct completions, $\rhohat$, out of the 16 completions for each prompt during a run of Linear-R for MATH 
}
\end{figure*}
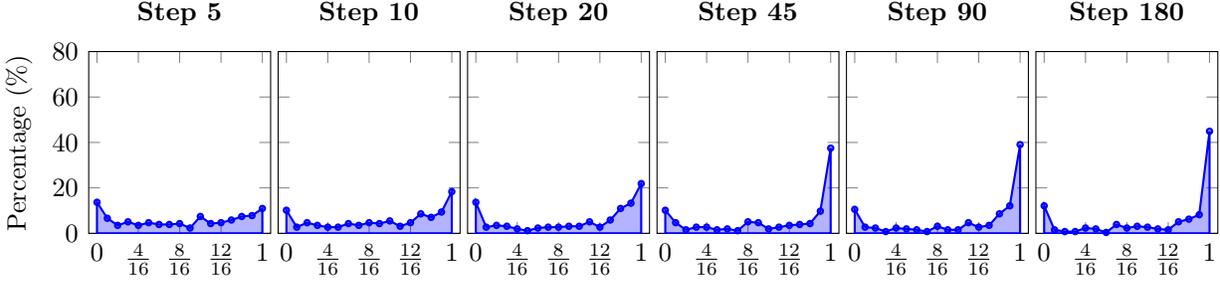

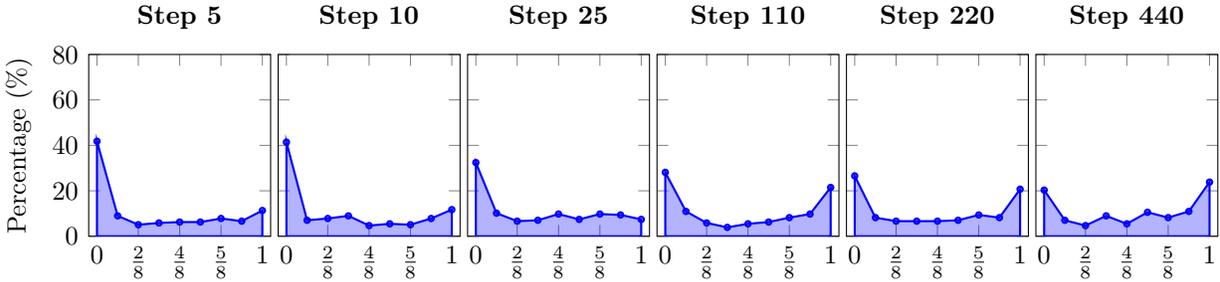
\begin{figure*}
\begin{center}
\begin{tikzpicture}

\pgfplotstableread{
bin_start step_5 step_10 step_25 step_110 step_220 step_440
0.0000 41.80 41.41 32.42 28.12 26.56 20.31
0.1250 8.98 7.03 10.16 10.94 8.20 7.03
0.2500 5.08 7.81 6.64 5.86 6.64 4.69
0.3750 5.86 8.98 7.03 3.91 6.64 8.98
0.5000 6.25 4.69 9.77 5.47 6.64 5.47
0.6250 6.25 5.47 7.42 6.25 7.03 10.55
0.7500 7.81 5.08 9.77 8.20 9.38 8.20
0.8750 6.64 7.81 9.38 9.77 8.20 10.94
1.0000 11.33 11.72 7.42 21.48 20.70 23.83
}\datatable

\begin{groupplot}[
    group style={
        group size=6 by 1,
        horizontal sep=0.7cm,
        ylabels at=edge left,
        yticklabels at=edge left,
        horizontal sep=0.1cm,
    },
    width=4cm,
    height=4cm,
    ymin=0,
    ymax=80,
    ylabel near ticks,
    xlabel near ticks,
    xtick={0,0.25,0.5,0.75,1},
    xticklabels={$0$,$\frac{2}{8}$,$\frac{4}{8}$,$\frac{5}{8}$,$1$},
    xmin=-0.05,
    xmax=1.05,
    tick label style={font=\small},
    label style={font=\small},
    title style={font=\small\bfseries},
]

\nextgroupplot[title={Step 5}, ylabel={Percentage (\%)}]
\addplot[thick, blue, mark=*, mark size=1pt, fill=blue, fill opacity=0.3] table[x=bin_start, y=step_5] {\datatable} \closedcycle;

\nextgroupplot[title={Step 10}]
\addplot[thick, blue, mark=*, mark size=1pt, fill=blue, fill opacity=0.3] table[x=bin_start, y=step_10] {\datatable} \closedcycle;

\nextgroupplot[title={Step 25}]
\addplot[thick, blue, mark=*, mark size=1pt, fill=blue, fill opacity=0.3] table[x=bin_start, y=step_25] {\datatable} \closedcycle;

\nextgroupplot[title={Step 110}]
\addplot[thick, blue, mark=*, mark size=1pt, fill=blue, fill opacity=0.3] table[x=bin_start, y=step_110] {\datatable} \closedcycle;

\nextgroupplot[title={Step 220}]
\addplot[thick, blue, mark=*, mark size=1pt, fill=blue, fill opacity=0.3] table[x=bin_start, y=step_220] {\datatable} \closedcycle;

\nextgroupplot[title={Step 440}]
\addplot[thick, blue, mark=*, mark size=1pt, fill=blue, fill opacity=0.3] table[x=bin_start, y=step_440] {\datatable} \closedcycle;
\end{groupplot}
\end{tikzpicture}
\end{center}
\vspace{-0.35cm}
\caption{
\label{fig:rhohatdistributionDAPO}
Distribution of the fraction of correct completions, $\rhohat$, out of the 8 completions for each prompt during a run of Linear-R for DAPO math (see Section~\ref{sec:DAPOmath}).
}
\end{figure*}

\begin{table*}[tb]
    \centering
    \caption{Overview of algorithms' \emph{effective} weights multiplying the difference of averaged positive and negative gradients $\widehat \nabla_1 - \widehat \nabla_0$ for binary rewards.}
    \label{tab:effective_weights_final}
    \renewcommand{\arraystretch}{1.5} 
    \setlength{\tabcolsep}{4pt}       
    
    \resizebox{\textwidth}{!}{%
    \begin{tabular}{lcccccc}
        \toprule
        & \multicolumn{4}{c}{\textbf{Proposed  Weightings}} & \multicolumn{2}{c}{\textbf{Standard Baselines}} \\
        \cmidrule(lr){2-5} \cmidrule(lr){6-7}
        \textbf{Algorithm} & \textbf{Linear-R} & \textbf{Sqrt-R} & \textbf{Plateau-R} & \textbf{Uniform-R} & \textbf{GRPO} & \textbf{RLOO} \\
        \midrule
        \begin{tabular}{@{}l@{}}\textbf{Effective} \\ \textbf{Weight}\end{tabular} & 
        $1 - \hat{\rho}$ & 
        $\sqrt{1 - \hat{\rho}}$ & 
        $\begin{cases} 1/2, & \hat{\rho} < 0.5 \\ \sqrt{\hat{\rho}(1-\hat{\rho})}, & \hat{\rho} \ge 0.5 \end{cases}$ & 
        $1$ & 
        $\sqrt{\hat{\rho}(1-\hat{\rho})}$ & 
        $\hat{\rho}(1-\hat{\rho})$ \\
        \bottomrule
    \end{tabular}%
    \vspace{-0.1in}
    }
\end{table*}

\section{Ablation studies}
In this section, we present ablation studies.

\subsection{Importance of assigning non-zero weights to zero-success groups}

The asymmetric weightings we consider in this paper assign non-zero weight to prompts with $\rhohat=0$. As mentioned this is important for performance. To demonstrate this, we performed further TinyZero experiments for Plateau-R, for which we assign zero weight to zero-success prompts (i.e., groups with $\rhohat=0$). 

See Figure~\ref{fig:tinyzerocomparison_ablation} for the results. It can be seen that with the same stepsize, training is less stable, and leads to worse performance. A smaller stepsize fixes the instability, but results in worse performance. 

We also considered the weight $\omega_x=\sqrt{(1-\rho)/\rho}$, which is another example of an asymmetric weight that assigns zero weight to zero-success groups. We call this kimi-weighting since the prioritized sampling strategy used by \citet{team_KimiK15Scaling_2025} where samples are reweighted by $1-\hat \rho$, can be interpreted through surrogate reward maximization~\cite{thrampoulidis_AdvantageShapingSurrogate_2025} (cf. Section~\ref{sec:surrogate reward perspective}) yielding that weight. 

As seen in Figure~\ref{fig:tinyzerocomparison_ablation}, kimi-weighting performs worse than the other asymmetric weightings.

\begin{figure}[tb]
    \centering

\usepgfplotslibrary{statistics}

\pgfplotsset{
    smoothcurve/.style={
        thick,
        smooth
    }
}

\begin{tikzpicture}
    \begin{groupplot}[
        group style={
            group size=2 by 1,
            horizontal sep=0.9cm,
            xlabels at=edge bottom,
            ylabels at=edge left,
        },
        width=7.5cm,
        height=5.5cm,
        xlabel={Step},
        ylabel={Pass@1},
        grid=major,
        grid style={dashed, gray!30},
        legend style={
            at={(1.9,0.1)},
            anchor=south east,
            font=\small,
        },
        table/col sep=comma,
    ]
    
    \nextgroupplot[xmin=0, xmax=100, title={Early training},width=4.5cm]
        \addplot[red, thick] table[x=Step, y=a_reinforce] {./fig/tinyzero_comparison.csv};
        \addplot[orange, thick] table[x=Step, y=a_advantage_0_1e-6] {./fig/tinyzero_comparison.csv};
        \addplot[yellow, thick] table[x=Step, y=a_advantage_0_5e-7] {./fig/tinyzero_comparison.csv};
        \addplot[green, thick] table[x=Step, y=a_advantage_0_75e-8] {./fig/tinyzero_comparison.csv};
        \addplot[blue, thick] table[x=Step, y=grpo] {./fig/tinyzero_comparison.csv};
        \addplot[purple, thick] table[x=Step, y=kimi_reinforce] {./fig/tinyzero_comparison.csv};
    
    \nextgroupplot[xmin=100, title={Later training}, ylabel={}]
        \addplot[smoothcurve,red, thick] table[x=Step, y=a_reinforce] {./fig/tinyzero_comparison.csv};
        \addlegendentry{Plateau-R,  lr 1e-6}
        \addplot[smoothcurve,orange, thick] table[x=Step, y=a_advantage_0_1e-6] {./fig/tinyzero_comparison.csv};
        \addlegendentry{Plateau-R, Zero,  lr 1e-6}
        \addplot[smoothcurve,yellow, thick] table[x=Step, y=a_advantage_0_5e-7] {./fig/tinyzero_comparison.csv};
        \addlegendentry{Plateau-R, Zero,  lr 0.5e-6}
        \addplot[smoothcurve,green, thick] table[x=Step, y=a_advantage_0_75e-8] {./fig/tinyzero_comparison.csv};
        \addlegendentry{Plateau-R, Zero,  lr 0.75e-6}
        \addplot[smoothcurve,blue, thick] table[x=Step, y=grpo] {./fig/tinyzero_comparison.csv};
        \addlegendentry{GRPO}
        \addplot[smoothcurve,purple, thick] table[x=Step, y=kimi_reinforce] {./fig/tinyzero_comparison.csv};
        \addlegendentry{kimi-REINFORCE}
    
    \end{groupplot}
\end{tikzpicture}
    \caption{From-scratch RL: TinyZero, ablation study on the importance of assigning non-zero weights to zero-success groups. Plateau-R modified with assigning zero weight to zero-success groups ($\hat \rho=0$) performs worse and is less stable; adjusting the learning rate does not fix this. Still, this modification is better than GRPO, so the 
    }
    \label{fig:tinyzerocomparison_ablation}
    \vspace{-0.4cm}
\end{figure}
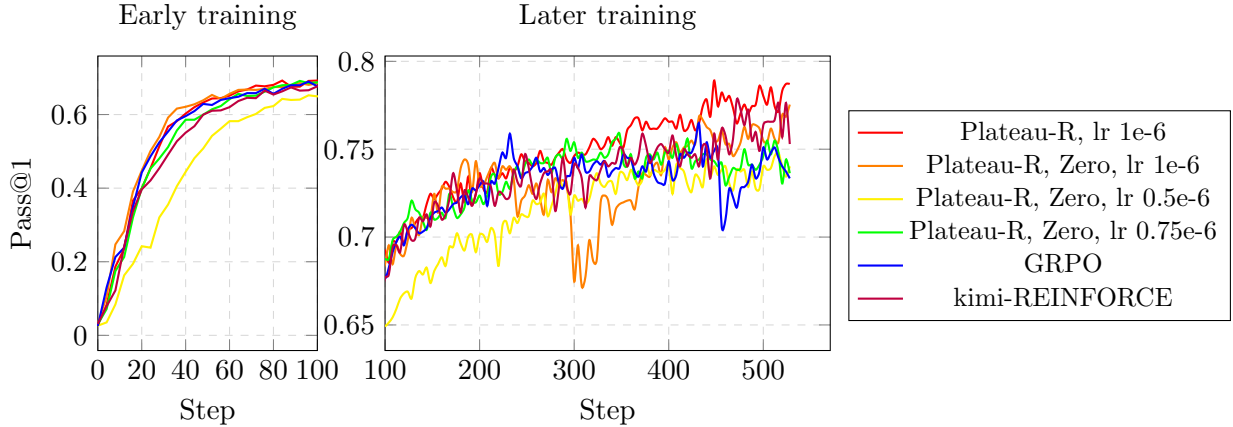

\subsection{Run-to-run differences}

There are of course run-to-run-differences for our results; to demonstrate that those are relatively modest, we performed three independent runs of Linear-R for TinyZero. The results in Figure~\ref{fig:tinyzerocomparison_run_variations} demonstrate that the run-to-run differences are small and thus the gains we observe over algorithms with symmetric prompt weightings are consistent.

\begin{figure}[tb]
    \centering

\begin{tikzpicture}
    \begin{groupplot}[
        group style={
            group size=2 by 1,
            horizontal sep=0.7cm,
            xlabels at=edge bottom,
            ylabels at=edge left,
        },
        width=7.5cm,
        height=5.5cm,
        xlabel={Step},
        ylabel={Pass@1},
        grid=major,
        grid style={dashed, gray!30},
        legend style={
            at={(1.5,0.1)},
            anchor=south east,
            font=\small,
        },
        table/col sep=comma,
    ]
    
    \nextgroupplot[xmin=0, xmax=50, title={Early training},width=3.5cm]
        \addplot[red, thick] table[x=Step, y=log_reinforce1] {./fig/tinyzero_comparison.csv};
        \addplot[red, thick] table[x=Step, y=log_reinforce2] {./fig/tinyzero_comparison.csv};
        \addplot[red, thick] table[x=Step, y=log_reinforce3] {./fig/tinyzero_comparison.csv};
    
    \nextgroupplot[xmin=50, title={Later training}, ylabel={}]
        \addplot[red, thick] table[x=Step, y=log_reinforce1] {./fig/tinyzero_comparison.csv};
        \addlegendentry{Linear-R R1}
        \addplot[red, thick] table[x=Step, y=log_reinforce2] {./fig/tinyzero_comparison.csv};
        \addlegendentry{Linear-R R2}
        \addplot[red, thick] table[x=Step, y=log_reinforce3] {./fig/tinyzero_comparison.csv};
        \addlegendentry{Linear-R R3}
    
    \end{groupplot}
\end{tikzpicture}
    \caption{From-scratch RL: TinyZero. Test error (i.e., fraction of correctly solved problems, Pass@1) during reinforcement learning. Three independent runs of Linear-R are shown to demonstrate that the run-to-run differences are small and gains are consistent.}
    \label{fig:tinyzerocomparison_run_variations}
    \vspace{-0.4cm}
\end{figure}
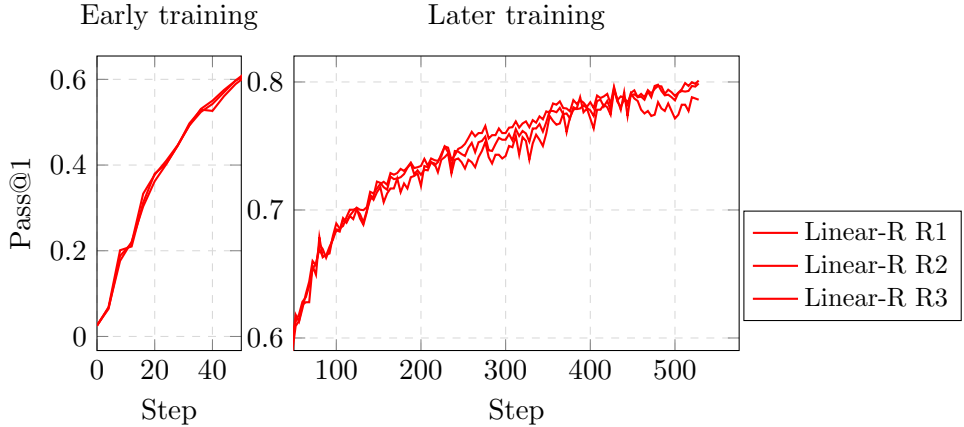

\subsection{Importance of gradient direction}
\label{sec:graddir}
In this work we vary the gradient weights $\omega_x(\rho)$ and choose the gradient direction $\hat d_x$ as the direction used by RLOO and GRPO. 
Choosing the gradient direction as the REINFORCE direction (see Section~\ref{sec:policyopt}, which effectively does not leverage gradients corresponding to zero-rewards) yields significantly worse performance: See Figure~\ref{fig:tinyzerocomparison_run_dir} for a comparison of  Linear-R vs the Linear-R weight with the REINFORCE direction (rejection sampling). 
We refer to this as rejection sampling as it corresponds to the log-type surrogate advocated in~\cite{davis_WhatObjectiveReasoning_2025} and their proposed implementation via rejection sampling. See also Section \ref{sec:onLR}.

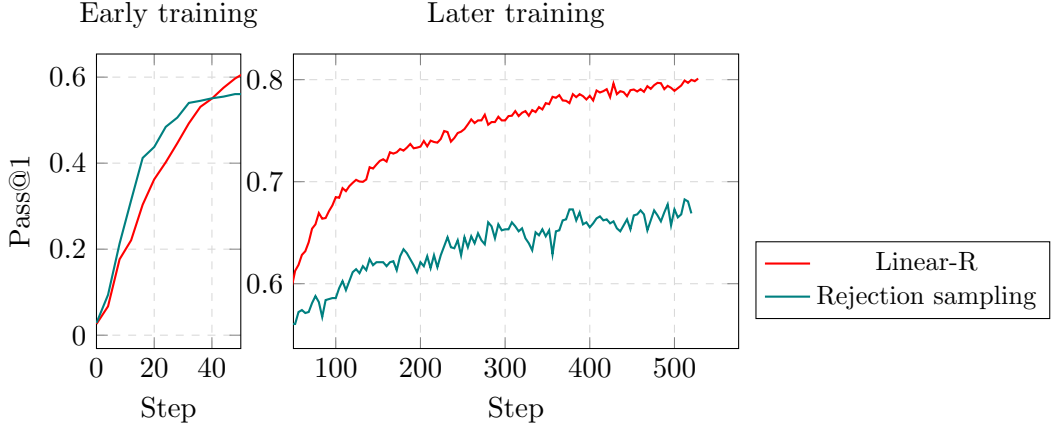
\begin{figure}[tb]
    \centering

\begin{tikzpicture}
    \begin{groupplot}[
        group style={
            group size=2 by 1,
            horizontal sep=0.7cm,
            xlabels at=edge bottom,
            ylabels at=edge left,
        },
        width=7.5cm,
        height=5.5cm,
        xlabel={Step},
        ylabel={Pass@1},
        grid=major,
        grid style={dashed, gray!30},
        legend style={
            at={(1.7,0.1)},
            anchor=south east,
            font=\small,
        },
        table/col sep=comma,
    ]
    
    \nextgroupplot[xmin=0, xmax=50, title={Early training},width=3.5cm]
        \addplot[red, thick] table[x=Step, y=log_reinforce1] {./fig/tinyzero_comparison.csv};
        \addplot[teal, thick] table[x=Step, y=rejection_sampling] {./fig/tinyzero_comparison.csv};
    
    \nextgroupplot[xmin=50, title={Later training}, ylabel={}]
        \addplot[red, thick] table[x=Step, y=log_reinforce1] {./fig/tinyzero_comparison.csv};
        \addlegendentry{Linear-R}
        \addplot[teal, thick] table[x=Step, y=rejection_sampling] {./fig/tinyzero_comparison.csv};
        \addlegendentry{Rejection sampling}
    
    \end{groupplot}
\end{tikzpicture}
    \caption{From-scratch RL: TinyZero. Test error (i.e., fraction of correctly solved problems, Pass@1) during reinforcement learning. 
    Linear-R vs Rejection sampling, }
    \label{fig:tinyzerocomparison_run_dir}
\end{figure}

\section{Experimental details}
\label{app:experimental_details}
Here we briefly describe the experimental details. As mentioned, for our RL experiments, we used the SkyRL codebase~\cite{cao2025skyrl}, and we have implemented our advantage estimators withing that framework; the code is in the supplementary material. 

The hyperparameters used in the experiments are in Table~\ref{tab:rlexperiments}. We used the same hyperparameters (such as batchsize, etc) within each of the four experimental setups for different advantage estimators, appart from the stepsize, where we considered and used different ones. Specifically we considered (2e-6,1e-6,5e-7) for some experiments for the different advantage estimators. 

For the TinyZero experiment the stepsizes for the runs shown are 1e-6 for each advantage estimator but for RLOO, where it is 2e-6.
Looking at effective weights in Figure~\ref{fig:gradient_weights} it makes sense that a stepsize that is good for RLOO should be twice of that of GRPO, given that the effective weights are very similar if we multiply the RLOO effective weight with 2. 
Applying the same thinking to Plateau-R, it might seem reasonable to decrese the stepsize of Plateau-R too relative to the other advantage estimators but here the situation is different: the effective batchsize (corresponding to completions with non-zero weight) for the Plateau-R is larger than for RLOO, and for larger batch sizes the stepsize can often be larger too. 
 
We trained on 1-2 nodes each with 4xH100 and our experiments altogether took 12k H100 hours.

\begin{table}[ht]
\centering
\begin{tabular}{lllll}
\hline
Hyperparameter & TinyZero & GSM8K & MATH & DAPO Math \\
\hline
Use KL Loss & No & No & No & No \\
Train Batch Size & 256 & 128/256 & 128 & 256 \\
Policy Mini Batch Size & 64 & 32/64 & 32 & 64 \\
Micro Train Batch Size / GPU & 8 & 4 & 8 & 4 \\
Micro Forward Batch Size / GPU & 8 & 4 & 8 & 4 \\
Max Prompt Length & 512 & 1024 & 1024 & 1024 \\
Max Response Length & 1024 & 2048 & 3072 & 7168 \\
Update Epochs Per Batch & 1 & 1 & 1 & 1 \\
Learning Rate & 1e-6 & 1e-6 & 5e-7 & 1e-6 \\
Max Grad Norm (Clip) & 0.5 & 1.0 & 1.0 & 1.0 \\
Clip Ratio Range & [0.2, 0.2] & [0.0, 0.0] & [0.2, 0.2] & [0.0, 0.0] \\
Sampling Temperature & 1.0  & 1.0  & 1.0  & 1.0  \\
Responses per prompt $\M$ & 16 & 16 & 16 & 8 \\
\hline
\end{tabular}
\caption{Hyperparameters for the fours sets of RL experiment.
\label{tab:rlexperiments}
}
\end{table}

\section{Proofs}

\subsection{The continuous limit of success-rate dynamics: Proof of Equation \eqref{eq:ODE}}\label{sec:get rho_t ODE}
The optimization problem in \eqref{eq:start_opt} is concave. Thus, rewriting its first-order optimality condition gives the following recursion for the optimal policies (see  \citep{mroueh_ReinforcementLearningVerifiable_2025}):
\begin{align}\label{eq:policy recursion}
&\pi_t(y|x) = \frac{\pi_{t-1}(y|x)\cdot \exp\left(\frac{1}{\beta}\cdot A_y(\rho_{t-1})\right)}{\E_{y'\sim\pi_{t-1}(\cdot|x)}\pi_{t-1}(y'|x)\exp\left(\frac{1}{\beta}\cdot A_{y'}(\rho_{t-1})\right)}
\\
&\,\,=\frac{\pi_{t-1}(y|x)\cdot \exp\left(\frac{1}{\beta}\cdot A_y(\rho_{t-1})\right)}{\rho_{t-1}\exp\left(\frac{1}{\beta}\cdot A_1(\rho_{t-1})\right)+(1-\rho_{t-1})\exp\left(\frac{1}{\beta}\cdot A_0(\rho_{t-1})\right)}\,.\nn
\end{align}
In the second equality, we used that rewards are binary, we recalled the definition 
$$\rho_{t-1}=\E_{y\sim\pi_{t-1}(\cdot|x)}\indicator{r(y)=1}\,,$$ and let $A_{0/1}$ be the advantage scores of wrong/correct responses. 
This expression makes explicit how the next policy re-weights correct and incorrect responses
relative to the previous policy. Importantly, because the update depends on the policy only through
its current success rate, the induced dynamics can be described entirely in terms of how this
success rate evolves over time. Concretely, using $\rho_t=\E_{y\sim\pi_t(\cdot|x)}\indicator{r(y)=1}=\int \mathrm{d}\pi_t(\cdot|x) \indicator{r(y)=1}$ yields the following recursion over success rates:
\begin{align}
    \rho_t  
    =\frac{1}{1 + \frac{1-\rho_{t-1}}{\rho_{t-1}} \exp\left(\frac{1}{\beta}\left(A_0(\rho_{t-1})-A_1(\rho_{t-1})\right)\right)}\nn\,.
\end{align}
Substituting the general advantage expressions from Eq. \eqref{eq:weighted emp advantage form} for weight function $\omega(\rho)$ we get the following recursion for the population (over model responses) success rate:
\begin{align}
\rho_t=\frac{1}{1 + \frac{1-\rho_{t-1}}{\rho_{t-1}} \exp\left(-\frac{1}{\beta}\omega(\rho_{t-1})\right)}\nn\,.
\end{align}

To further simplify the dynamics, it is convenient to work with the log-odds of the success
probability,
\[
L_t := \log\frac{\rho_t}{1-\rho_t}.
\]
This change of variables linearizes multiplicative updates in probability space and reveals that
learning proceeds additively in log-odds space.
$$\frac{1-\rho_t}{\rho_t} = \frac{1-\rho_{t-1}}{\rho_{t-1}} \exp\left(-\frac{1}{\beta}\omega(\rho_{t-1})\right)  \implies L_t = L_{t-1} + \frac{1}{\beta} \omega(\rho_{t-1})\,.$$
The log-odds of the success rate increases by a step size proportional to the gradient of the surrogate reward. The derivation up to this point follows \citep[Thm.~2]{mroueh_ReinforcementLearningVerifiable_2025}, extended here to general weight functions $\omega(\rho)$.

To gain further intuition about these dynamics and the role played by the weight, we consider a continuous-time limit of the
log-odds update. Concretely, let us consider a continuous-time limit of the form
\[
\dt{L_t} = \frac{1}{\beta} \omega(\rho_{t}) \;\implies\; \dt{\rho_t} = \frac{1}{\beta}\rho_t(1-\rho_t)\omega(\rho_t)\,,
\]
where for the latter implication we used the chain rule. 


\subsection{Dynamics for regular time for RLOO, Linear-R, Sqrt-R, and GRPO}

\begin{proposition}
\label{prop:ode_solutions_regular_time}
Solving the ODE~\eqref{eq:ODE}, i.e.,
\begin{align*}
\dt{\rho_t} = \frac{1}{\beta}\rho_t(1-\rho_t)\cdot \omega(\rho_t)\,.
\end{align*}
for various weights $\omega$ and for initialization $\rho_0\in(0,1)$ gives:
\begin{itemize}
    \item RLOO, $\omega(\rho)=1$:
    $\rho_t = \left({1 + \left(\frac{1-\rho_0}{\rho_0}\right) e^{-t/\beta}}\right)^{-1}$
    \item Linear-R, $\omega(\rho)=\frac{1}{\rho}$: 
    $\rho_t = 1 - (1-\rho_0)e^{-t/\beta}$
    \item Sqrt-R, $\omega(\rho)=\frac{1}{\rho\sqrt{1-\rho}}$: $\rho_t = 1-\left(\max\left\{\sqrt{1-\rho_0}-\frac{t}{2\beta},0\right\}\right)^2$
    \item  GRPO, $\omega(\rho)=\frac{1}{\sqrt{\rho(1-\rho)}}$:
    $\rho_t = \sin^2\left( \min\left( \frac{\pi}{2}, \frac{t}{2\beta} + \arcsin(\sqrt{\rho_0}) \right) \right)$
\end{itemize}
\end{proposition}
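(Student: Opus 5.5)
The plan is to solve each case by separation of variables. The ODE \eqref{eq:ODE} is autonomous, so for a positive weight we can write
\[
\int_{\rho_0}^{\rho_t}\frac{\mathrm{d}\rho}{\rho(1-\rho)\,\omega(\rho)}=\frac{t}{\beta}.
\]
It is convenient to use the log-odds form $\frac{\mathrm{d}L_t}{\mathrm{d}t}=\frac{1}{\beta}\omega(\rho_t)$ derived in Appendix~\ref{sec:get rho_t ODE}, or a substitution adapted to each weight. For each of the four weights I would compute the antiderivative, invert it, and then check directly that the resulting closed form satisfies the ODE and the initial condition.

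\textbf{RLOO and Linear-R.} For RLOO, $\omega=1$ gives $\frac{\mathrm{d}L}{\mathrm{d}t}=1/\beta$. Hence $L_t=L_0+t/\beta$, and solving for $\rho_t$ gives the logistic formula. For Linear-R the ODE reduces to $\frac{\mathrm{d}\rho}{\mathrm{d}t}=\frac{1}{\beta}(1-\rho)$, which is linear in $1-\rho$. It yields $1-\rho_t=(1-\rho_0)e^{-t/\beta}$.

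\textbf{Sqrt-R.} The ODE becomes $\frac{\mathrm{d}\rho}{\mathrm{d}t}=\frac{1}{\beta}\sqrt{1-\rho}$. I would substitute $u=\sqrt{1-\rho}$, so that $\frac{\mathrm{d}u}{\mathrm{d}t}=-\frac{1}{2u}\frac{\mathrm{d}\rho}{\mathrm{d}t}=-\frac{1}{2\beta}$. This gives $u_t=\sqrt{1-\rho_0}-t/(2\beta)$, which reaches $0$ in finite time $2\beta\sqrt{1-\rho_0}$.

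\textbf{GRPO.} Here the ODE is $\frac{\mathrm{d}\rho}{\mathrm{d}t}=\frac{1}{\beta}\sqrt{\rho(1-\rho)}$. I would substitute $\rho=\sin^2\phi$ with $\phi\in[0,\pi/2]$. Then $\frac{\mathrm{d}\rho}{\mathrm{d}t}=2\sin\phi\cos\phi\,\frac{\mathrm{d}\phi}{\mathrm{d}t}$ and $\sqrt{\rho(1-\rho)}=\sin\phi\cos\phi$, so $\frac{\mathrm{d}\phi}{\mathrm{d}t}=\frac{1}{2\beta}$ and $\phi_t=\arcsin\sqrt{\rho_0}+t/(2\beta)$. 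This angle hits $\pi/2$ in finite time.

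The main subtlety lies in the last two cases, which reach $\rho=1$ in finite time. Their right-hand sides $\sqrt{1-\rho}$ and $\sqrt{\rho(1-\rho)}$ are not Lipschitz at $\rho=1$, so uniqueness fails there. Formally continuing the explicit expression would make $\rho$ decrease again, which is unphysical. I would therefore argue as follows:
\begin{itemize}
\item Before the hitting time, the solution is unique by the local Lipschitz property on $(0,1)$.
\item After the hitting time, $\rho\equiv1$ is an equilibrium, i.e.\ a zero of the right-hand side, and we select it as the continuation since $\rho$ is nondecreasing and bounded by $1$.
\end{itemize}
This selection is what the $\max\{\cdot,0\}$ and $\min\{\pi/2,\cdot\}$ clamps in the statement encode. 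In the first two cases the solutions approach $1$ only asymptotically, so no clamp is needed.
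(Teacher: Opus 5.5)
Your proposal is correct and follows essentially the same route as the paper's proof: separation of variables with the same substitutions ($1-\rho$ for Linear-R, a square-root substitution for Sqrt-R, $\rho=\sin^2\phi$ for GRPO), and the clamp is justified by $\rho=1$ being an equilibrium. One small correction to your uniqueness remark: forward uniqueness does not actually fail at $\rho=1$. The right-hand side is nonnegative and $\rho\le 1$, so your own monotonicity argument forces $\rho\equiv 1$ after the hitting time, and the clamped formula is the unique forward continuation rather than a selection; non-uniqueness through $\rho=1$ occurs only backward in time.
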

\begin{proof} We prove each case separately. 

\textbf{RLOO:}
For RLOO, the ODE becomes
\begin{align*}
\dt{\rho} = \frac{1}{\beta} \rho (1-\rho),
\end{align*}
and integration yields
\begin{align*}
\int \frac{1}{\rho (1-\rho)} d\rho = \int \frac{1}{\beta} dt.
\end{align*}
Solving this gives
\begin{align*}
\log \rho - \log(1-\rho) - (\log \rho_0 - \log(1-\rho_0)) = \frac{t}{\beta},
\end{align*}
which after some algebra gives the stated solution for RLOO above.

\textbf{Linear-R:} Next consider Linear-R, for which the ODE becomes
\begin{align*}
\dt{\rho} = \frac{1}{\beta} (1-\rho).
\end{align*}
This is a linear ODE 
\begin{align*}
\frac{d}{dt} (1-\rho) = - \frac{1}{\beta} (1-\rho),
\end{align*}
with solution
\begin{align*}
(1-\rho) = (1-\rho_0) e^{-t/\beta}\,.
\end{align*}
which after re-arranging yields the expression for Linear-R. 

\textbf{Sqrt-R:} For Sqrt-R, the ODE becomes
\begin{align*}
    \frac{d\rho}{dt}=\frac{1}{\beta}\sqrt{1-\rho}
\end{align*}
Let \(u_t:=1-\rho_t\). 
Then, while \(u_t>0\): \[ \frac{d u_t}{dt}=-\frac{1}{\beta}\sqrt{u_t}\,\,\Rightarrow\,\,
2\sqrt{u_t}=2\sqrt{u_0}-\frac{1}{\beta}t\,.\] Therefore
\({u_t}=\left(\max\{\sqrt{u_0}-\frac{t}{2\beta},0\}\right)^2\) which finishes the proof.

\textbf{GRPO:} For GRPO, the ODE becomes
\[
\frac{d\rho}{dt} 
= \frac{1}{\beta}\sqrt{\rho(1-\rho)}.
\]
Let $\theta_t=\arcsin(\sqrt{\rho_t})$. Then $\sin^2(\theta)=\rho$, and by the chain rule,
\[
\frac{d\rho}{dt}
= \frac{d}{dt}\big(\sin^2\theta\big)
= 2\sin\theta\cos\theta \,\frac{d\theta}{dt}
= 2\sqrt{\rho}\sqrt{1-\rho}\,\frac{d\theta}{dt}
\]
Equating this with the ODE gives
\[
2\sqrt{\rho(1-\rho)}\,\frac{d\theta}{dt}
\;=\; \frac{1}{\beta}\sqrt{\rho(1-\rho)}.
\]
Thus, for $\rho_t\in(0,1)$ we obtain the linear ODE
\[
\frac{d\theta}{dt} \;=\; \frac{1}{2\beta}.
\]
Integrating and using $\theta_{t=0}=\arcsin(\sqrt{\rho_0})$ yields
\[
\theta(t) \;=\; \frac{t}{2\beta} + \arcsin(\sqrt{\rho_0}).
\]
This derivation is valid as long as $\rho_t < 1$, which corresponds to $\theta_t < \pi/2$. Once $\theta_t$ reaches $\pi/2$, $\rho_t$ reaches $1$. Since $\rho=1$ is an equilibrium point, the solution stays at $1$ thereafter.
Therefore, the general solution is
\[
\rho_t \;=\; \sin^2\left( \min\left( \frac{\pi}{2}, \frac{t}{2\beta} + \arcsin(\sqrt{\rho_0}) \right) \right)\,,
\]
which concludes the proof for GRPO.
\end{proof}

\subsection{Effective time dynamics for Linear-R, Sqrt-R, RLOO, and GRPO}

\begin{proposition}
\label{prop:closed_form_tau}
Solving the effective-time ODE~\eqref{eq:effective_time_ode}, i.e., 
\begin{align*}
\frac{\mathrm{d}\rho_\tau}{\mathrm{d}\tau} = \frac{1}{\beta}\,\rho_\tau^2(1-\rho_\tau) \cdot \omega(\rho_\tau)
\end{align*}
for various weights $\omega$ and for initialization $\rho_{\tau=0}=\rho_0\in(0,1)$ gives:
\begin{itemize}
    \item Linear-R, $\omega(\rho)=\frac{1}{\rho}$:
    $
        \rho_\tau
        =
        \frac{1}{1+\left(\frac{1-\rho_0}{\rho_0}\right)e^{-\tau/\beta}}
    $

    \item Sqrt-R, $\omega(\rho)=\frac{1}{\rho\sqrt{1-\rho}}$: Define $\tau_*:=2\beta\operatorname{arctanh}\big(\sqrt{1-\rho_0}\big)$. Then,
    \begin{align*}
\rho_\tau
=
\begin{cases}
\operatorname{sech}^2\left(
\operatorname{arctanh}\big(\sqrt{1-\rho_0}\big)
-\frac{\tau}{2\beta}
\right),
& \tau \le \tau_\star, \\[1.2ex]
1,
& \tau > \tau_\star .
\end{cases}
\end{align*}

    \item GRPO, $\omega(\rho)=\frac{1}{\sqrt{\rho(1-\rho)}}$:
    Define $z_0:=\sqrt{\frac{1-\rho_0}{\rho_0}}$ and $\tau_*:=2\beta z_0$. Then, 
    \begin{align*}
        \rho_\tau
        =
        \frac{1}{1+\left(\max \left\{z_0-\frac{\tau}{2\beta},\,0\right\}\right)^2}
    \end{align*}
    In particular, $\rho_\tau=1$ for all $\tau\ge \tau_*$.

    \item RLOO, $\omega(\rho)=1$:
    Define
    $ 
    s_\tau
        :=
        \frac{\tau}{\beta}
        -\frac{1}{\rho_0}
        +\log \left(\frac{\rho_0}{1-\rho_0}\right).
    $
    Then $\rho_\tau$ is uniquely determined by the implicit relation
    \begin{align*}
        -\frac{1}{\rho_\tau}
        +\log\left(\frac{\rho_\tau}{1-\rho_\tau}\right)
        \;=\;
        s_\tau,
    \end{align*}
    and admits an explicit form in terms of the principal branch of the Lambert-$W$ function:
    \begin{align*}
        \rho_\tau
        =
        \frac{1}{1+W\left(\exp\left(-s_\tau-1\right)\right)}\,.
    \end{align*}
\end{itemize}
\end{proposition}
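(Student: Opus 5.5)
The plan is to handle each weight separately, mirroring the proof of Proposition~\ref{prop:ode_solutions_regular_time}. In each case I substitute $\omega$ into \eqref{eq:effective_time_ode}, separate variables as $\mathrm{d}\rho/(\rho^2(1-\rho)\omega(\rho)) = \mathrm{d}\tau/\beta$, integrate from $\rho_0$, and invert the result. Since $\rho^2(1-\rho)\omega(\rho)>0$ on $(0,1)$, the right-hand side of \eqref{eq:effective_time_ode} is positive there, so $\rho_\tau$ is increasing. Whenever the antiderivative reaches $\rho=1$ in finite $\tau$, the fixed point $\rho=1$ lets us continue the solution as the constant $1$, exactly as in the GRPO case of the regular-time proposition.

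For \textbf{Linear-R} the ODE becomes $\mathrm{d}\rho/\mathrm{d}\tau=\frac{1}{\beta}\rho(1-\rho)$. This is the regular-time RLOO equation, so the logistic formula follows directly from Proposition~\ref{prop:ode_solutions_regular_time} with $t$ replaced by $\tau$.

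For \textbf{GRPO} the ODE is $\mathrm{d}\rho/\mathrm{d}\tau=\frac1\beta\rho^{3/2}(1-\rho)^{1/2}$. I would substitute $z=\sqrt{(1-\rho)/\rho}$, so $\rho=1/(1+z^2)$ and $\mathrm{d}\rho=-2z\rho^2\,\mathrm{d}z$. Also $\rho^{3/2}(1-\rho)^{1/2}=\rho^2 z$. Together these give $\mathrm{d}z/\mathrm{d}\tau=-1/(2\beta)$, hence $z_\tau=z_0-\tau/(2\beta)$ until $z$ hits $0$ at $\tau_*=2\beta z_0$.

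For \textbf{Sqrt-R} the ODE is $\mathrm{d}\rho/\mathrm{d}\tau=\frac1\beta\rho\sqrt{1-\rho}$. I would set $u=\sqrt{1-\rho}$, so $\mathrm{d}\rho=-2u\,\mathrm{d}u$ and $\rho=1-u^2$. This gives $\mathrm{d}u/\mathrm{d}\tau=-(1-u^2)/(2\beta)$, whose solution is $\operatorname{arctanh}(u_\tau)=\operatorname{arctanh}(u_0)-\tau/(2\beta)$. Then $\rho=1-\tanh^2=\operatorname{sech}^2$, and $\rho$ reaches $1$ at $\tau_*=2\beta\operatorname{arctanh}(\sqrt{1-\rho_0})$.

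The \textbf{RLOO} case ($\omega=1$) is where the real work lies. The ODE $\mathrm{d}\rho/\mathrm{d}\tau=\frac1\beta\rho^2(1-\rho)$ separates via the partial fractions $\frac{1}{\rho^2(1-\rho)}=\frac1{\rho^2}+\frac1\rho+\frac1{1-\rho}$. The antiderivative is $-1/\rho+\log(\rho/(1-\rho))$, which yields the stated implicit relation with $s_\tau$. This function is strictly increasing from $-\infty$ to $+\infty$ on $(0,1)$, so $\rho_\tau$ is uniquely determined.

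To get the Lambert-$W$ form, I would set $w=(1-\rho)/\rho$, so $1/\rho=1+w$ and $\log(\rho/(1-\rho))=-\log w$. The relation becomes $-1-w-\log w=s_\tau$, that is, $w e^{w}=e^{-s_\tau-1}$. Since $w>0$, the principal branch gives $w=W(e^{-s_\tau-1})$, and $\rho=1/(1+w)$ as claimed. The main obstacle is keeping track of signs in this inversion and checking that the principal branch is the right one, which holds because the argument is positive and $W$ is single-valued there. The $\max/\min$ clipping in the other cases only needs the remark that $\rho=1$ is an equilibrium.
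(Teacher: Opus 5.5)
Your proposal is correct and follows the paper's proof essentially step for step: it uses the same substitutions ($u=\sqrt{1-\rho}$ for Sqrt-R, $z=\sqrt{(1-\rho)/\rho}$ for GRPO, and partial fractions plus $w=(1-\rho)/\rho$ for the Lambert-$W$ form in the RLOO case), and the equilibrium at $\rho=1$ handles the clipping. Two additions are minor but worthwhile: you get Linear-R directly from the regular-time RLOO solution, and you observe that the implicit function maps $(0,1)$ onto $\mathbb{R}$, which gives existence of $\rho_\tau$ for every $\tau$ and not only uniqueness.
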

\begin{proof} We prove each case separately.

\textbf{Linear-R: $\omega(\rho)=1/\rho$.}
The ODE becomes
\[
\frac{d\rho}{d\tau}=\frac{1}{\beta}\rho(1-\rho).
\]
Separate variables:
\[
\frac{d\rho}{\rho(1-\rho)}=\frac{d\tau}{\beta}.
\]
Using $\frac{1}{\rho(1-\rho)}=\frac{1}{\rho}+\frac{1}{1-\rho}$, integrate to get
\[
\log\Big(\frac{\rho}{1-\rho}\Big)=\frac{\tau}{\beta}+C.
\]
Apply $\rho(0)=\rho_0$ to obtain $C=\log\big(\frac{\rho_0}{1-\rho_0}\big)$. Exponentiating,
\[
\frac{\rho}{1-\rho}=\frac{\rho_0}{1-\rho_0}e^{\tau/\beta}.
\]
Solving this for $\rho$ gives the result.

\textbf{Sqrt-R: $\omega(\rho)=\frac{1}{\rho\sqrt{(1-\rho)}}$.}
We solve the ODE
\[
\frac{d\rho}{d\tau}
= \frac{1}{\beta}\,\rho\sqrt{1-\rho}.
\]
Separate variables:
\[
\frac{d\rho}{\rho\sqrt{1-\rho}}=\frac{1}{\beta}\,d\tau.
\]
Let $u=\sqrt{1-\rho}$, so $\rho=1-u^2$ and $d\rho=-2u\,du$. Then
\[
\int \frac{d\rho}{\rho\sqrt{1-\rho}}
= \int \frac{-2u\,du}{(1-u^2)u}
= -2\int \frac{du}{1-u^2}
= -2\,\operatorname{arctanh}(u).
\]
Hence
\[
\operatorname{arctanh}\big(\sqrt{1-\rho_\tau}\big)
= C - \frac{\tau}{2\beta},
\]
as long as $C\ge \frac{\tau}{2\beta}$. 
Applying $\tanh$ and squaring yields
\[
\sqrt{1-\rho_\tau}=\tanh\left(C-\frac{\tau}{2\beta}\right)\,\,\implies\,\,
\rho_\tau=1-\tanh^2\left(C-\frac{\tau}{2\beta}\right)
=\operatorname{sech}^2\left(C-\frac{\tau}{2\beta}\right).
\]
Since $\rho_{\tau=0}=\rho_0\in(0,1)$, then
$
C=\operatorname{arctanh}\big(\sqrt{1-\rho_0}\big),
$
so the solution is
\[
\rho_\tau
= \operatorname{sech}^2\left(
\operatorname{arctanh}\big(\sqrt{1-\rho_0}\big)
-\frac{\tau}{2\beta}
\right)\,,
\]
as long as $\tau\le \tau_*=2\beta \operatorname{arctanh}\big(\sqrt{1-\rho_0}\big)$. At $\tau=\tau_*$ we have $\rho(\tau_*)=1$. Since the right-hand side of the original ODE vanishes at $\rho=1$ (because $1-\rho=0$), the constant function $\rho_\tau=1$ is a solution for all $\tau\ge \tau_*$. 

\textbf{GRPO: $\omega(\rho)=1/\sqrt{\rho(1-\rho)}$.}
The ODE becomes
\begin{align}
\frac{d\rho}{d\tau}=\frac{1}{\beta}\,\rho^{3/2}\sqrt{1-\rho}.\label{eq:proof grpo effective time}
\end{align}
Define
\[
z:=\sqrt{\frac{1-\rho}{\rho}},
\qquad \text{so that}\qquad \rho=\frac{1}{1+z^2}.
\]
Differentiating the latter gives
\[
\frac{d\rho}{d\tau}=-\frac{2z}{(1+z^2)^2}\frac{d{z}}{d\tau}.
\]
Using Eq. \eqref{eq:proof grpo effective time} and the identity $\rho^{3/2}\sqrt{1-\rho}=\frac{z}{(1+z^2)^2}$, we get for $z>0$ the following linear ODE:
\[
\frac{dz}{d\tau}=-\frac{1}{2\beta}.
\]
Hence
\[
z(\tau)=z_0-\frac{\tau}{2\beta},
\qquad \text{where } z_0=\sqrt{\frac{1-\rho_0}{\rho_0}}.
\]
This formula is valid as long as $z(\tau)\ge 0$, i.e. for $\tau\le \tau_*:=2\beta z_0$.
At $\tau=\tau_*$ we have $z(\tau_*)=0$, hence $\rho(\tau_*)=1$. Since the right-hand side of the original ODE
vanishes at $\rho=1$ (because $1-\rho=0$), the constant function $\rho_\tau\equiv 1$ is a solution for all
$\tau\ge \tau_*$. By uniqueness of solutions for the $z$-equation on the region $z>0$ and the fact that $\rho=1$
is an equilibrium of the $\rho$-equation, the global solution can be written as stated in the lemma.

\textbf{RLOO: $\omega(\rho)=1$.}
The ODE becomes
\[
\frac{d\rho}{d\tau}=\frac{1}{\beta}\rho^2(1-\rho).
\]
Separate variables:
\[
\frac{d\rho}{\rho^2(1-\rho)}=\frac{d\tau}{\beta}.
\]
Using 
$
\frac{1}{\rho^2(1-\rho)}=\frac{1}{\rho^2}+\frac{1}{\rho}+\frac{1}{1-\rho}
$
and integrating gives
\[
-\frac{1}{\rho}+\log\Big(\frac{\rho}{1-\rho}\Big)=\frac{\tau}{\beta}+C,
\]
where, by applying initialization $\rho_{\tau=0}=\rho_0$,
\[
C=-\frac{1}{\rho_0}+\log\Big(\frac{\rho_0}{1-\rho_0}\Big).
\]
Thus, $\rho_\tau$ is determined implicitly by solving
\[
-\frac{1}{\rho_\tau}+\log\Big(\frac{\rho_\tau}{1-\rho_\tau}\Big)=\frac{\tau}{\beta}-\frac{1}{\rho_0}+\log\Big(\frac{\rho_0}{1-\rho_0}\Big)=:s_\tau.
\]
To see uniqueness, note that the left-hand side is a continuous strictly increasing function of $\rho\in(0,1)$.

For an explicit form, set $u_\tau:=\frac{1-\rho_\tau}{\rho_\tau}$ so that $\rho_\tau=\frac{1}{1+u_\tau}$. 
Then, the implicit equation becomes
\[
-(1+u)-\log u = s_\tau
\,\,\,\Longleftrightarrow\,\,\,
u+\log u = -(s_\tau+1).
\]
Exponentiating yields
\[
u e^{u}=e^{-(s_\tau+1)}.
\]
Therefore
\[
u_\tau=W\big(e^{-(s_\tau+1)}\big),
\]
 where $W$ is the principal branch of the Lambert-$W$ function, completing the proof.
\end{proof}

\subsection{Calculating the constants for budget constraints}

In this section, we clarify the constant for a normalization budget on the total weight magnitude applied throughout the learning process that are used to draw Figure \ref{fig:dynamics_comparison_normalized}. To plot the normalized dynamics, we use the closed-form solutions from Propositions~\ref{prop:ode_solutions_regular_time} and~\ref{prop:closed_form_tau} with appropriately rescaled $\beta$ values. This works because normalizing $\omega(\rho)$ by a constant $c$ in the ODE $\frac{d\rho}{dt} = \frac{1}{\beta}\rho(1-\rho)\omega(\rho)$ is equivalent to using the unnormalized weight $\omega$ with $\beta$ replaced by an effective $\beta_\text{eff}=\beta/c$.
For budget-normalized plotting with $\beta=1$, the effective values are: $\beta_{GRPO}=\pi - 2\arcsin(\sqrt{\rho_0})$, $\beta_{Sqrt-R}=2\operatorname{arctanh}(\sqrt{1-\rho_0})$, $\beta_{Linear-R}=\ln(1/\rho_0)$, and $\beta_{RLOO}=1-\rho_0$.

\begin{lemma}
\label{lem:budget}
Consider the dynamics in Eq. \eqref{eq:ODE} with initialization $\rho_0 \in (0,1)$ and target $\rho_*=1$. Under the budget constraint $\int_{\rho_0}^1 \omega(\rho) \, d\rho = 1$, the normalized weight functions for the algorithms are:

\begin{itemize}
    \item GRPO :
    \[
    \omega(\rho) = \frac{1}{\pi - 2\arcsin(\sqrt{\rho_0})} \cdot \frac{1}{\sqrt{\rho(1-\rho)}}
    \]
    
    \item Sqrt-R:
    \[
    \omega(\rho) = \frac{1}{2\operatorname{arctanh}(\sqrt{1-\rho_0})} \cdot \frac{1}{\rho\sqrt{1-\rho}},
    \]
    
    \item Linear-R:
    \[
    \omega(\rho) = \frac{1}{\ln(1/\rho_0)} \cdot \frac{1}{\rho}.
    \]
    
    \item RLOO:
    \[
    \omega(\rho) = \frac{1}{1-\rho_0}.
    \]
\end{itemize}
\end{lemma}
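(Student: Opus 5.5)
The lemma reduces to four definite integrals. For each unnormalized weight $\omega_0$ from Proposition~\ref{prop:ode_solutions_regular_time}, I will compute $Z(\rho_0):=\int_{\rho_0}^1 \omega_0(\rho)\,\mathrm{d}\rho$, check that it is finite and positive for $\rho_0\in(0,1)$, and set $\omega=\omega_0/Z$. Then $\int_{\rho_0}^1\omega\,\mathrm{d}\rho=1$ by construction, and the four displayed constants are exactly $1/Z$. The same $Z$ gives the effective $\beta_{\mathrm{eff}}=\beta Z$ quoted before the lemma, because the ODE~\eqref{eq:ODE} depends on $\omega$ and $\beta$ only through $\omega/\beta$.

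\textbf{RLOO and Linear-R.} These are immediate. For $\omega_0=1$ we get $Z=1-\rho_0$. For $\omega_0=1/\rho$ we get $Z=\ln 1-\ln\rho_0=\ln(1/\rho_0)$.

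\textbf{GRPO.} I will use the substitution already used in the proof of Proposition~\ref{prop:ode_solutions_regular_time}, namely $\rho=\sin^2\theta$. This gives $\mathrm{d}\rho=2\sin\theta\cos\theta\,\mathrm{d}\theta$, so the integrand becomes $2\,\mathrm{d}\theta$. The limits are $\theta_0=\arcsin\sqrt{\rho_0}$ and $\pi/2$. Hence $Z=\pi-2\arcsin\sqrt{\rho_0}$.

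\textbf{Sqrt-R.} I will reuse the antiderivative from the proof of Proposition~\ref{prop:closed_form_tau}. With $u=\sqrt{1-\rho}$,
\[
\int\frac{\mathrm{d}\rho}{\rho\sqrt{1-\rho}}=-2\,\operatorname{arctanh}\big(\sqrt{1-\rho}\big).
\]
At $\rho=1$ the antiderivative is $0$, so $Z=2\operatorname{arctanh}(\sqrt{1-\rho_0})$.

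\textbf{Main obstacle: endpoint behaviour.} GRPO and Sqrt-R have integrands that blow up at $\rho=1$, so I must check that these improper integrals converge. For GRPO the singularity is $(1-\rho)^{-1/2}$, which is integrable, and the $\theta$-substitution shows it explicitly. For Sqrt-R the antiderivative $-2\operatorname{arctanh}(u)$ is continuous as $u\to0^+$, so the limit exists and equals $0$. Since $\rho_0>0$, the factor $1/\rho$ causes no problem at the lower endpoint. The result would fail as stated only if $\rho_0=0$: Linear-R and Sqrt-R would then have infinite budget, and that case is excluded by hypothesis.
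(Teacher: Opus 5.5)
Your proposal is correct and follows essentially the same route as the paper: you compute the normalizing integral $\int_{\rho_0}^1\omega_0(\rho)\,\mathrm{d}\rho$ for each raw weight. For GRPO the paper uses the antiderivative $2\arcsin\sqrt{\rho}$, which is equivalent to your $\rho=\sin^2\theta$ substitution, and for Sqrt-R it uses the same $u=\sqrt{1-\rho}$ substitution, while your explicit check that the improper integrals converge at $\rho=1$ is a small addition the paper leaves implicit.
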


\begin{proof}

\noindent\textbf{GRPO:}
The raw weight is $w(\rho) = \frac{1}{\sqrt{\rho(1-\rho)}}$. The normalization constant $C$ is determined by:
\[
\frac{1}{C} = \int_{\rho_0}^1 \frac{d\rho}{\sqrt{\rho(1-\rho)}} = \left[ 2\arcsin(\sqrt{\rho}) \right]_{\rho_0}^1  = \pi - 2\arcsin(\sqrt{\rho_0}).
\]

\noindent\textbf{Sqrt-R:}
The raw weight is $w(\rho) = \frac{1}{\rho\sqrt{1-\rho}}$. The normalization constant $C$ is determined by:
\[
\frac{1}{C} = \int_{\rho_0}^1 \frac{d\rho}{\rho\sqrt{1-\rho}}.
\]
Using the substitution $u = \sqrt{1-\rho}$, we have $d\rho = -2u \, du$ and $\rho = 1-u^2$. Thus,
\[
\int_{\sqrt{1-\rho_0}}^0 \frac{-2u \, du}{(1-u^2)u} = -2 \int_{\sqrt{1-\rho_0}}^0 \frac{du}{1-u^2} = -\left[2\operatorname{arctanh}(u)\right]_{\sqrt{1-\rho_0}}^0=2\operatorname{arctanh}(\sqrt{1-\rho_0}).
\]

\noindent\textbf{Linear-R:}
The raw weight is $w(\rho) = 1/\rho$. The normalization constant is $1/C = \int_{\rho_0}^1 \frac{d\rho}{\rho} = -\log(\rho_0)$.

\noindent\textbf{RLOO:}
The raw weight is $w(\rho) = 1$. The normalization constant is $1/C = 1 - \rho_0$.
\end{proof}

\subsection{Proof of Proposition~\ref{prop:single}}
The time required to reach $\rho_t=\rho_*$ starting from $\rho_0$ is
\begin{align}\label{eq:hitting tim}
T(\rho_0,\rho_*;\omega)
\;=\;
\int_0^T dt
\;=\;
\int_{\rho_0}^{\rho_*}
\frac{d\rho}{d\rho_t/dt}\,\,.
\end{align}
Thus, using the dynamics Eq. \eqref{eq:ODE}, the optimization problem becomes
\[
\min_{\omega:[0,1]\rightarrow\R_{\geq0}} \;
 \int_{\rho_0}^{\rho_*}
\frac{d\rho}{\rho(1-\rho) \cdot\omega(\rho)}
\quad
\text{s.t. }
\int_{\rho_0}^{\rho_*} \omega(\rho)\,d\rho \leq 1\,.
\]
Define $a(\rho)=\frac{1}{\rho(1-\rho)}$.
By Cauchy--Schwarz,
\[
\left(\int_{\rho_0}^{\rho_*} \sqrt{a(\rho)}\,d\rho\right)^2
\;\le\;
\left(\int_{\rho_0}^{\rho_*}
\frac{a(\rho)}{\omega(\rho)}\,d\rho\right)
\left(\int_{\rho_0}^{\rho_*} \omega(\rho)\,d\rho\right).
\]
Since the second factor is at most $1$, we obtain the lower bound
$
T(\rho_0,\rho_*;\omega)
\;\ge\;
\beta
\left(\int_{\rho_0}^{\rho_*}
\frac{d\rho}{\sqrt{\rho(1-\rho)}}\right)^2, 
$
with 
equality if and only if
$
\omega(\rho) \;\propto\; \sqrt{a(\rho)}\,
$ and the budget constraint is tight. 
Therefore, the optimal choice is
\[
\omega_{\mathrm{opt}}(\rho)
=
c\,\frac{1}{\sqrt{\rho(1-\rho)}}
\quad \text{for } \rho\in[\rho_0,\rho_*]\,,
\]
with
$c$ chosen to saturate the budget normalization constraint. Concretely, this gives 
$$\omega_{\text{opt}}(\rho) = \frac{1}{2 \left( \arcsin\sqrt{\rho_*} - \arcsin\sqrt{\rho_0} \right)}\cdot\frac{1}{\sqrt{\rho(1-\rho)}}\,.$$

\subsection{Proof of Proposition~\ref{prop:single_effective_time}}
Recall the population dynamics in Eq. \eqref{eq:ODE} and the effective time-to-target definition:
\[
T_g(\rho_0,\rho_*;\omega)
:=
\int_{\rho_0}^{T} g(\rho_t)\,dt,
\qquad \text{where } \rho_T=\rho_*,
\]
with $g(\rho)=1/\rho$.
Using the change of variables $dt = d\rho / \dt{ \rho_t}$ yields
\begin{align}\label{eq:effective_hitting_time}
T_g(\rho_0,\rho_*;\omega)
&=
\int_{\rho_0}^{\rho_*} \frac{g(\rho)}{d\rho_t/dt}\,d\rho
=
\int_{\rho_0}^{\rho_*}
\frac{g(\rho)}{\frac{1}{\beta}\rho(1-\rho)\omega(\rho)}\,d\rho=
\beta\int_{\rho_0}^{\rho_*}
\frac{d\rho}{\rho^2(1-\rho)\,\omega(\rho)}.
\end{align}
Thus, the optimization of minimizing $T_g$ subject to $\omega(\rho)\ge 0$ and the budget constraint
$\int_{\rho_0}^{\rho_*}\omega(\rho)\,d\rho\le 1$ becomes
\[
\min_{\omega:[0,1]\rightarrow\R_{\geq0}}\;
\int_{\rho_0}^{\rho_*}
\frac{a(\rho)}{\omega(\rho)}\,d\rho
\quad\text{s.t.}\quad
\int_{\rho_0}^{\rho_*}\omega(\rho)\,d\rho\le 1.
\]
where we defined $a(\rho):=\frac{1}{\rho^2(1-\rho)}$. 
By Cauchy--Schwarz,
\[
\left(\int_{\rho_0}^{\rho_*}\sqrt{a(\rho)}\,\mathrm{d}\rho\right)^2
\le
\left(\int_{\rho_0}^{\rho_*}\frac{a(\rho)}{\omega(\rho)}\,\mathrm{d}\rho\right)
\left(\int_{\rho_0}^{\rho_*}\omega(\rho)\,\mathrm{d}\rho\right).
\]
Since the second factor is at most $1$, we obtain the lower bound
\[
T_g(\rho_0,\rho_*;\omega)
\ge
\beta\left(\int_{\rho_0}^{\rho_*}\sqrt{\frac{1}{\rho^2(1-\rho)}}\,\mathrm{d}\rho\right)^2,
\]
with equality if and only if
\[
\omega(\rho)\ \propto\ \sqrt{a(\rho)}
\ =\ \frac{1}{\rho\sqrt{1-\rho}}
\quad\text{for }\rho\in[\rho_0,\rho_*]
\]
with the proportionality constant determined to saturate the budget constraint.

\subsection{Origins of Linear-R}\label{sec:onLR}

We arrive at Linear-R by selecting asymmetric weighting $\omega(\rho)=1/\rho$ in the general form of RLVR algorithms in Equation~\eqref{eq:weighted emp}.  We specifically choose such a weight that aggressively focuses on small $\rho$ to encourage larger gradient signal. 

We can alternatively arrive at the same weighting by applying the forward-engineering recipe \citep{thrampoulidis_AdvantageShapingSurrogate_2025} to a logarithmic surrogate reward. Concretely, start from a surrogate $F(u)=\log(u)$ and optimize $F(\rho_x(\theta))$: direct differentiation yields $F'(\rho_x(\theta))\cdot \nabla_\theta\rho_x(\theta)=\frac{1}{\rho_x(\theta)}\cdot \nabla_\theta\rho_x(\theta)$, which coincides with our Linear-R update after using an RLOO estimator for $\nabla_\theta\rho_x(\theta)$. 

It is worth noting that another popular RL algorithm, which is different to our Linear-R, has also recently been shown to map to the logarithmic surrogate reward by \citet{davis_WhatObjectiveReasoning_2025}.
In our notation, rejection sampling updates the parameters in the direction
\begin{align}\label{eq:rejection sampling}
\frac{1}{M^+}\sum_{i=1}^{M^+}\nabla_\theta \log \pi_\theta(y|x)  = \widehat\nabla_1
\end{align}
In population limit, this equals  
\[
\E_{y\sim\picond}[\nabla_\theta \log \pi_\theta(y|x) | r(y)=1] = \frac{\E[r(y) \nabla_\theta \log\pi_\theta(y|x)]}{\rhotheta} = \frac{1}{\rhotheta} \nabla_\theta \rhotheta=\frac{\mathrm{d}\log \rho_\theta}{\mathrm{d}\rho_\theta}\,,
\]
which corresponds to asymptotically ($M\gg1$) maximizing the log surrogate reward. 

This can also be seen as a special instance of Equation~\eqref{eq:weighted emp} with $\omega_x(\rho)=1/\rho$ and $\hat d_x(\theta) = \widehat\nabla_1$. That is, rejection sampling uses a $1/\rho$ weight as Linear-R, but uses the REINFORCE estimate for the gradient rather than the RLOO used for Linear-R. 

We also run this rejection sampling weighting for the TinyZero experiment, and it performed significantly worse than the other considered algorithms (Plateau-R, Linear-R, Uniform-R, GRPO, RLOO).

For completeness, we mention that our Sqrt-R can be analogously interpreted as optimizing the surrogate reward $F(u)=-2\operatorname{arctanh}(\sqrt{1-u})$ using again the RLOO gradient estimator. To see this, recall from Sec. \ref{sec:assymetric} that $F'(\rho)=\frac{1}{\rho\sqrt{1-\rho}}$, which is exactly Sqrt-R's weight.

\end{document}